\newcommand{\bbE}{\mathbb{E}}
\newcommand{\bbR}{\mathbb{R}}
\newcommand{\bbP}{\mathbb{P}}
\newcommand{\bv}{\mathbf{v}}
\newcommand{\bp}{\mathbf{p}}
\newcommand{\bx}{\mathbf{x}}
\newcommand{\bz}{\mathbf{z}}
\newcommand{\bs}{\mathbf{s}}
\newcommand{\bsmu}{\boldsymbol{\mu}}
\newcommand{\calR}{\mathcal{R}}
\newcommand{\calM}{\mathcal{M}}
\newcommand{\calP}{\mathcal{P}}
\newcommand{\calY}{\mathcal{Y}}
\newcommand{\calX}{\mathcal{X}}
\newcommand{\calN}{\mathcal{N}}
\newtheorem{claim}{Claim}
\newcommand{\mypara}[1]{\medskip\noindent\textbf{#1}}
\theoremstyle{plain}
\newtheorem{theorem}{Theorem}[section]
\theoremstyle{definition}
\newtheorem{definition}[theorem]{Definition}
\theoremstyle{remark}
\title{Large-Scale Public Data Improves Differentially Private Image Generation Quality}
\author{%
  Ruihan Wu\thanks{Work performed during internship at Meta Fundamental AI Research.} \\
  Cornell University\\
  \texttt{rw565@cornell.edu} \\
  \And
  Chuan Guo \\
  Meta AI\\
  \texttt{chuanguo@meta.com}\\
  \And
  Kamalika \\
  Meta AI\\
  \texttt{kamalika@meta.com}\\
}
\begin{document}

\maketitle

\begin{abstract}
Public data has been frequently used to improve the privacy-accuracy trade-off of differentially private machine learning, but prior work largely assumes that this data come from the same distribution as the private. In this work, we look at how to use \emph{generic} large-scale public data to improve the quality of differentially private image generation in Generative Adversarial Networks (GANs), and provide an improved method that uses public data effectively. Our method works under the assumption that the support of the public data distribution contains the support of the private; an example of this is when the public data come from a general-purpose internet-scale image source, while the private data consist of images of a specific type. Detailed evaluations show that our method achieves SOTA in terms of FID score and other metrics compared with existing methods that use public data, and can generate high-quality, photo-realistic images in a differentially private manner.
\end{abstract}

\section{Introduction}
%
%
%
%
%

Differential privacy (DP)~\citep{dwork2006calibrating, dwork2014algorithmic} is considered the gold standard for privacy in machine learning and data analytics with sensitive data, with many use-cases in industry and government.  While differentially private machine learning has seen considerable recent progress~\citep{li2021large, yu2021differentially, de2022unlocking}, the main challenge remains balancing the privacy-utility trade-off. DP provides individual-level privacy by injecting noise into the training process in order to obscure the private value of a single data point. This reduces the statistical efficiency, or accuracy per sample of the trained model -- sometimes rather significantly. As a result, a large body of work in differentially private machine learning has focused on how to design algorithms that provide better privacy vs. statistical efficiency trade-offs~\citep{abadi2016deep, sheffet2017differentially, iyengar2019towards}. 

In particular, for use-cases that involve small amounts of private data, a line of work has looked into combining sensitive data with publicly available data to improve the utility of private machine learning~\citep{papernot2016semi, papernot2018scalable, zhu2020private}. However, the vast majority of this line of work assume that the public data is drawn from the same data distribution as the private data. This assumption is unrealistic in practice, since very often public data comes from a different source and may have very different qualities than the private data. 

In this work, we relax this assumption and consider the problem of generative modeling of a distribution of private images based on \emph{generic} large-scale public data. In particular, we assume that the support of the public data distribution contains the support of the private. An example use-case is when the public data comes from a general-purpose internet-scale source (such as ImageNet~\citep{deng2009imagenet}), while the private data consists of images of a specific type. Under this assumption, our goal is to train a Generative Adversarial Network (GAN) to generate samples from the private data distribution while preserving differential privacy of the private data. 


The main challenge in this problem is ensuring high image generation quality. Learning the private distribution either through direct training or fine-tuning using differentially private stochastic gradient descent (DP-SGD; ~\citep{song2013stochastic, abadi2016deep}) requires the addition of a large amount of noise during training, which in turn results in noisy models that generate blurry or malformed images. The key insight in our work is that instead of privately learning to generate highly detailed images from scratch, it is much more efficient to privately \emph{adapt} a generative model trained on public data. 

To leverage this insight, we first use a pair of encoder and decoder trained entirely on public data. The encoder maps images to a low-dimensional feature space and the decoder generates images given a feature vector. This architecture effectively reduces the problem of learning the private image distribution to learning a private \emph{feature} distribution in the latent space of the encoder. We do so either by modeling the private features as a multivariate Gaussian, or by modeling the difference between the public and private feature distributions using a density ratio estimator. Both methods are highly sample-efficient when applied differentially privately. Using the estimated private feature distribution, we then sample from it to obtain feature vectors and use the decoder to generate a new image from the private image distribution.



Finally, we evaluate our proposed algorithms choosing ImageNet as public data, and six separate image datasets as private data. We show that when privacy levels are moderate to high, our algorithms vastly outperform existing baselines in terms of FID scores as well as other distribution quality metrics. Visual inspection of the generated images reveals that unlike prior work, our methods are capable of producing drawn from the private distribution that are high quality and realistic even for moderate to high privacy levels.

\section{Preliminaries}

\subsection{Differentially Private Machine Learning}


Differential privacy (DP)~\citep{dwork2006calibrating, dwork2014algorithmic} is a cryptographically motivated definition of privacy that is now considered the gold standard in private data analysis.
Differential privacy applies to a randomized algorithm, and the main idea is that the participation of a single data point in the dataset should not change the probability of any outcome by much. Formally, the definition is as follows. 


\begin{definition}[$(\varepsilon,\delta)$-Differential Privacy]
	Let $\varepsilon, \delta\in \bbR^{\geq0}$. A randomized algorithm $\calM: (\calX\times\calY)^n \to \cal R$ with domain $(\calX\times\calY)^n$ and range $\calR$ satisfies $(\varepsilon,\delta)$-differential privacy if for any two datasets $D, D'\in(\calX\times\calY)^n$ that differ by a single person's private data $(\bx, y)$, 
	and for any subset of outputs $S\subseteq \calR$, we have: 
	$$
	\bbP[\calM(D)\in S] \leq e^{\varepsilon}\cdot \bbP[\calM(D')\in S] + \delta.
	$$
\end{definition}

Observe that the definition involves two privacy parameters $\varepsilon$ and $\delta$; for both, higher values imply lower privacy. DP has achieved considerable popularity in the literature because of its excellent properties -- resistance to prior information, effectiveness against privacy attacks~\citep{ yeom2018privacy, humphries2020differentially, pmlr-v162-guo22c}, as well as graceful composition under data re-use.

The standard tool for differentially private deep learning is differentially private stochastic gradient descent (DP-SGD; ~\citep{song2013stochastic, abadi2016deep}), which aims to train a deep learning model by minimizing an empirical loss function calculated over the training data points.  
For this purpose, in each iteration, DP-SGD samples a batch of training data points with the poisson sample rate $q$, and calculates the gradients of the loss function corresponding to those points. Each gradient is then clipped to a pre-set norm $C$, and Gaussian noise is added to the average gradient as follows:
\[ \hat{g} = \frac{1}{B}\sum_{i=1}^B  \left(\frac{g_i}{\max(1, \|g_i\|/C)} + \calN(\mathbf{0}, \sigma^2 C^2 I)\right),\]
where $g_i$ is the gradient of the loss function corresponding to example $i$ in the batch. 
\citet{mironov2019r} proposes an advanced privacy accounting method, which can calculate the privacy parameters from the total iterations $T$, poisson sample rate $q$, training set size $n$, and the scale of the noise $\sigma$.
Given the privacy parameters $(\varepsilon, \delta)$, although there is no explicit form to set the value $\sigma$, a binary search can help find an appropriate $\sigma$ as $\varepsilon$ (with fixed $\delta$) are monotonically increasing as $\sigma$ decreases.


\begin{figure*}[ht!]
    \centering
    \includegraphics[width=\textwidth]{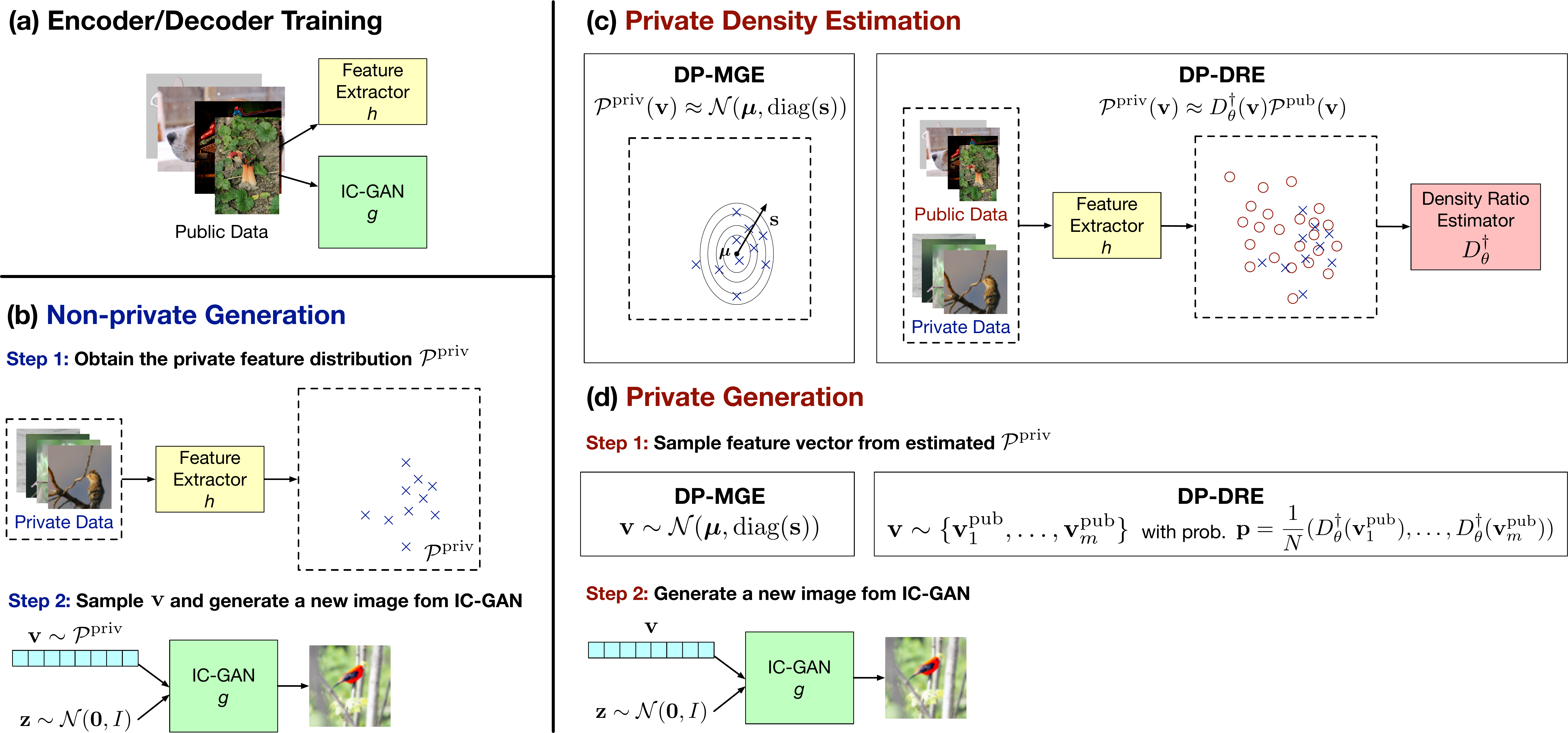}
    \caption{\textbf{Method overview.} We first train a feature extractor and IC-GAN on public data in (a), which can be used for non-private data generation from the private distribution $\calP^{\rm priv}$ as shown in (b). To make this generation process private, we estimate $\calP^{\rm priv}$ privately in (c) using either a multivariate Gaussian distribution (DP-MGE; Section \ref{subsec:norm_dist_est}) or using a density ratio estimator combined with the public distribution (DP-DRE; Section \ref{subsec:density_ratio_est}). The private generation process then proceeds by sampling a feature vector $\bv$ from the estimated distribution and using the IC-GAN to generate an image in (d).}
    \vspace{-2ex}
    \label{fig:method_overview}
\end{figure*}

\subsection{Generative Adversarial Networks} 

For image generation models, we use Generative Adversarial Networks (GANs; \citep{goodfellow2020generative}), a standard tool in deep generative modeling. To facilitate learning from off-distribution data, we use as our backbone the recently proposed Instance-Conditioned GAN (IC-GAN; \citep{casanova2021instance}) model. IC-GAN has shown excellent performance in generating ``transfer'' samples -- which are essentially samples from a (slightly) different distribution than the one that the IC-GAN was trained on.

In particular, IC-GAN works as follows. First, we train an unsupervised
feature extractor $h$ on the training data, and use it to obtain
representations $h(\bx)$ of the input.  Then, we learn a generator that models
the data distribution as a mixture of the instance-conditional distributions.
Specifically, given an instance $\bx$, the IC-GAN generator takes its feature
vector $h(\bx)$ as input and samples from a distribution of images $\bx'$ whose
feature vectors $h(\bx')$ are close to the input feature $h(\bx)$. 

To use the IC-GAN to generate samples from a distribution that is different from the training, we simply need to sample an instance $\bx$ from the dataset, compute the feature vector $h(\bx)$, and generate a new sample by feeding $h(\bx)$ to the generator. If $\bx$ still lies within the support of the training distribution -- even if it is not directly drawn from it -- the IC-GAN should be able to generate a sample close to it.






\subsection{Problem Set-Up.}


\mypara{Problem statement.} Our precise problem statement is as follows. We are given a private image dataset $D^{\rm priv}$ and an auxiliary public image dataset $D^{\rm pub}$.
Our goal is to learn a generator $G$ that can generate new images such that (a) the distribution of the generated images is close to the distribution of $D^{\rm priv}$ and (b) the learning algorithm that outputs the generator satisfies $(\varepsilon, \delta)$-differential privacy with respect to $D^{\rm priv}$. This problem setting follows \citet{harder2022differentially}, and is motivated by the difficulty of private image generation from only a relatively small private dataset~\citep{xie2018differentially, jordon2018pate, long2021g, chen2020gs}. 

Observe that the learning algorithm can use $D^{\rm pub}$ as it sees fit, and hence is easy if $D^{\rm pub}$ and $D^{\rm priv}$ are drawn from the same distribution. The challenge however is this may not be true, and often is not, in real-world settings. In this work, we relax this by assuming that {\em the support of the public data distribution contains the support of the private.} An example of this is when the public data comes from a general-purpose internet-scale source, while the private data consists of images of a specific type. For example, if the private dataset contains images of different objects, ImageNet~\citep{deng2009imagenet} is a reasonable public dataset that includes a wide variety of objects. Another example is that CelebA---a public face dataset containing various celebrities world-wide---can be a reasonable public dataset if the private dataset contains faces of individuals locally.

\section{Method}
\label{sec:method}

\textbf{Main ideas.} Our core insight is that instead of privately learning to generate highly detailed images from scratch, it is more sample-efficient to privately \emph{adapt} a generative model trained on public data. This adaptation process can be broken into three steps (see Figure \ref{fig:method_overview} for the illustration):
\begin{enumerate}[noitemsep, leftmargin=*, topsep=0pt]
\item We use a pair of encoder and decoder trained entirely on public data. The encoder maps images to a low-dimensional feature space and the decoder generates images given a feature.
\item The encoder-decoder architecture effectively reduces the problem of learning the private image distribution to learning the private \emph{feature} distribution in the encoder feature space. This can be done either by modeling the private features as a multivariate Gaussian, or by modeling the difference between the public and private feature distributions using a density ratio estimator. Both algorithms are highly sample-efficient when applied differentially privately.
\item Using the estimated private feature distribution, we can sample from it to obtain feature vectors and use the decoder to generate a novel image.
\end{enumerate}





\textbf{Encoder-decoder architecture.} Given an image $\bx$, the encoder is a neural network that maps $\bx$ to some feature vector $\bv = h(\bx)$ in a low-dimensional feature space. For instance, we can instantiate the encoder with the convolutional layers of a ResNet50~\citep{he2016deep} network. The decoder is a generative model that takes the feature vector $\bv$ and generates an image similar to $\bx$, which can be done using an IC-GAN~\citep{casanova2021instance}. Both the encoder (\emph{i.e.}, feature extractor) and the decoder are trained on public data using standard training methodologies.

The encoder-decoder architecture can be readily used for \emph{non-private} image generation. In Figure \ref{fig:method_overview}(b), the private data is converted to a private feature distribution $\calP^{\rm priv}$ using the feature extractor $h$. To generate an image, we can sample $\bv \sim \calP^{\rm priv}$ and use the IC-GAN $g$ to output an image. Notably, since IC-GAN is capable of generating \emph{new images} whose feature vectors are close to $\bv$, this allows us to obtain new samples from the private distribution. 

\textbf{Private adaptation via density estimation.} The generation process outlined above is not differentially private since it depends on particular samples in the private data distribution. However, if we can estimate $\calP^{\rm priv}$ by modeling it privately then we can sample feature from the estimated distribution to generate new samples. Since $\calP^{\rm priv}$ is a distribution on a low-dimensional feature space, we can model it privately in a sample-efficient manner by leveraging existing DP techniques. In Sections \ref{subsec:norm_dist_est} and \ref{subsec:density_ratio_est}, we propose two modeling algorithms: 1. \emph{DP Multivariate Gaussian Estimation} (DP-MGE), which models $\calP^{\rm priv}$ as a multivariate Gaussian; and 2. \emph{DP Density Ratio Estimation} (DP-DRE), which models the ratio between $\calP^{\rm priv}$ and the public feature distribution $\calP^{\rm pub}$ using a trained network. Figure \ref{fig:method_overview}(c) gives an illustration of the two algorithms.



\textbf{Private image generation.} Given a differentially private estimator of the private feature distribution $\calP^{\rm priv}$, we can sample from it to obtain a feature vector $\bv$ and use the decoder to output a new generated image $g(\bv)$ just as in non-private generation. Figure \ref{fig:method_overview}(d) details the sampling procedure for both DP-MGE and DP-DRE. The only remaining question is how to model $\calP^{\rm priv}$ differentially privately, which we detail in the following sections.


%

\subsection{Differentially Private Multivariate Gaussian Estimation (DP-MGE)}
\label{subsec:norm_dist_est}

Our first idea is to simply model $\calP^{\rm priv}$ as a normal distribution $\calN(\bsmu, 
\text{diag}(\bs))$ where $\text{diag}(\bs)$ is a diagonal matrix with the diagonal $\bs$. This is a plausible model for $\calP^{\rm priv}$ if it is unimodal, \emph{e.g.}, if the private dataset $D^{\rm priv}$ contains different breeds of dogs.
Denote $\bv_i^{\rm priv}$ as the feature vector of the $i^{th}$ data in $D^{\rm priv}$.
From samples $\{\bv_1^{\rm priv}, \cdots, \bv_n^{\rm priv} \}$, the non-private estimators for $\bsmu$ and $\bs$ are:
$$
\bsmu = \frac{1}{n}\sum_{i=1}^n \bv_i^{\rm priv} \text{ and }\bs=\frac{1}{n}\sum_{i=1}^n (\bv_i^{\rm priv})^2 - \bsmu^2,
$$
where $(\cdot)^2$ denotes elementary-wise squaring. 

We employ the Gaussian mechanism to estimate $\bsmu$ and $\bs$ privately. Assume that $\|\bv^{\rm priv}_i\| 
\leq 1$ for $i=1, \cdots, n$. The following estimators are $(\varepsilon,\delta)$-DP:
\begin{equation}
\label{eq:dp_mean}
	\bsmu^{\rm dp} = \frac{1}{n}\sum_{i=1}^n \bv_i^{\rm priv} + \calN\left(\mathbf{0}, \frac{4\sigma_{\varepsilon/2, \delta/2}^2}{n^2}I\right), 
	\bs^{\rm dp}=\frac{1}{n}\sum_{i=1}^n (\bv_i^{\rm priv})^2 - \left(\bsmu^{\rm dp}\right)^2 + \calN\left(\mathbf{0}, \frac{4\sigma_{\varepsilon/2, \delta/2}^2}{n^2}I\right),
\end{equation}
where $\sigma_{\varepsilon, \delta}=\frac{\sqrt{2\log 1/\delta + 2\varepsilon} + \sqrt{2\log 1/\delta}}{2\varepsilon}$. We formally state this in the following Claim that this estimator satisfies DP and put the proof in the appendix.
\begin{claim}
\label{clm:mge_dp}
	If $\|\bv^{\rm priv}_i\|\leq 1$ for $i=1, \cdots n$, then the estimators for $(\bsmu^{\rm dp}, \bs^{\rm dp})$ defined in Equation \ref{eq:dp_mean} are $(\varepsilon, \delta)$-DP w.r.t. the private image dataset $D^{\rm priv}$.
\end{claim}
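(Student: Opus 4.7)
The plan is to view the release $(\bsmu^{\rm dp}, \bs^{\rm dp})$ as the post-processing of two independent Gaussian mechanisms, one for the first moment and one for the (raw) second moment, and then combine them via basic composition. The key building block is the analytic Gaussian mechanism of Balle \& Wang, which is what the noise multiplier $\sigma_{\varepsilon,\delta} = (\sqrt{2\log(1/\delta)+2\varepsilon} + \sqrt{2\log(1/\delta)})/(2\varepsilon)$ is calibrated to: if a statistic has $\ell_2$-sensitivity $\Delta$, then adding $\calN(\mathbf{0}, \sigma_{\varepsilon,\delta}^2\Delta^2 I)$ yields $(\varepsilon,\delta)$-DP.

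First I would bound the $\ell_2$-sensitivities. For the mean $\frac{1}{n}\sum_i \bv_i^{\rm priv}$, swapping one record changes the sum by at most $\|\bv_i^{\rm priv} - \bv_i'^{\rm priv}\|_2 \le 2$ (by the assumption $\|\bv_i^{\rm priv}\|\le 1$ and the triangle inequality), so the sensitivity is $\Delta_1 \le 2/n$. For the raw second moment $\frac{1}{n}\sum_i (\bv_i^{\rm priv})^2$ (entrywise squaring), I need $\|(\bv_i^{\rm priv})^2\|_2 \le 1$; this follows from $\sum_j v_{i,j}^4 \le (\sum_j v_{i,j}^2)^2 \le 1$, so again $\Delta_2 \le 2/n$. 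Thus the analytic Gaussian mechanism with noise variance $(2\sigma_{\varepsilon/2,\delta/2}/n)^2 = 4\sigma_{\varepsilon/2,\delta/2}^2/n^2$ yields an $(\varepsilon/2,\delta/2)$-DP release for each of the two statistics separately—these are exactly the noise levels in Equation \ref{eq:dp_mean}.

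Next I would combine. The noises in the two releases are independent, so basic composition gives that the joint release of the noisy first moment $A := \frac{1}{n}\sum_i \bv_i^{\rm priv} + \calN(\mathbf{0}, 4\sigma_{\varepsilon/2,\delta/2}^2/n^2\, I)$ and the noisy second moment $B := \frac{1}{n}\sum_i (\bv_i^{\rm priv})^2 + \calN(\mathbf{0}, 4\sigma_{\varepsilon/2,\delta/2}^2/n^2\, I)$ is $(\varepsilon,\delta)$-DP. Since $(\bsmu^{\rm dp}, \bs^{\rm dp}) = (A,\, B - A^2)$ is a deterministic function of $(A, B)$, the post-processing property of differential privacy gives the claim.

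The only real subtlety, and the one step worth being careful about, is the sensitivity bound for the squared-feature statistic: one might instinctively write $\|(\bv_i^{\rm priv})^2\|_2 \le \|\bv_i^{\rm priv}\|_2^2$, which already gives $\le 1$, but it is worth spelling out the entrywise argument so the bound does not accidentally degrade with dimension. Everything else—the analytic Gaussian calibration, basic composition of two $(\varepsilon/2,\delta/2)$ mechanisms into $(\varepsilon,\delta)$, and post-processing—is standard, so the proof is essentially a careful bookkeeping of sensitivities and privacy budget.
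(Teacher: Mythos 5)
Your proof is correct and follows essentially the same route as the paper: bound both $\ell_2$-sensitivities by $2/n$ (using $\|(\bv)^2\|_2 \le \|\bv\|_2^2 \le 1$ for the entrywise-squared statistic), add Gaussian noise with multiplier $\sigma_{\varepsilon/2,\delta/2}$ times the sensitivity to each of the two moments, compose the two $(\varepsilon/2,\delta/2)$-DP releases into an $(\varepsilon,\delta)$-DP one, and treat the subtraction of $(\bsmu^{\rm dp})^2$ as post-processing. The one discrepancy is attribution: the formula $\sigma_{\varepsilon,\delta}=\bigl(\sqrt{2\log(1/\delta)+2\varepsilon}+\sqrt{2\log(1/\delta)}\bigr)/(2\varepsilon)$ is not the Balle--Wang analytic calibration (which is CDF-based and tighter) but the value obtained by optimizing the R\'enyi-DP-to-DP conversion $\varepsilon(\alpha)=\alpha/(2\sigma^2)+\log(1/\delta)/(\alpha-1)$ over $\alpha$, which is exactly how the paper justifies it; your invocation still yields a valid guarantee since this $\sigma$ dominates the analytic one, but the paper supplies the derivation you are implicitly relying on.
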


\textbf{Sampling.} With $\bsmu^{\rm dp}$ and $\bs^{\rm dp}$, we can generate a new latent vector $\bv$ by sampling from $\calN(\bsmu^{\rm dp}, \text{diag}(\bs^{\rm dp}))$, and then use the IC-GAN to generate a new image $g(\bv)$.

\subsection{Differentially Private Density Ratio Estimation (DP-DRE)}
\label{subsec:density_ratio_est}

\begin{algorithm}[!t]
\begin{algorithmic}[1]
 \STATE  \textbf{Input:} 
public feature vectors $\{\bv^{\rm pub}_1, \cdots, \bv^{\rm pub}_m\}$, private feature vectors $\{\bv^{\rm priv}_1, \cdots, \bv^{\rm priv}_n\}$, DP parameters $(\varepsilon, \delta)$
 \STATE \textbf{Training hyperparameters:} total iteration $T$, learning rate $\eta$, batch size $B$, norm bound $C$.
 \STATE Compute the proper $\sigma$ that guarantees the output to be $(\varepsilon, \delta)$-DP from \citet{mironov2019r} and \citet{opacus}.
 \STATE Initialize the weight $\theta_0$, the $1^{\rm st}$ moment vector $m_0$, the $2^{\rm nd}$ moment vector $v_0$.
 \FOR{$t = 1,\cdots, T$}
 \STATE Sample a batch of private feature vectors $\{\bv^{\rm priv}_{r_1}, \cdots \bv^{\rm priv}_{r_{B_t}}\}$ with the poisson sample rate $q$.
 \STATE Uniformly sample $B_t$ public feature vectors $\{\bv^{\rm pub}_{s_1}, \cdots \bv^{\rm pub}_{s_{B_t}}\}$.
 \STATE Compute the gradient $g^t$ by 
 $$g^t \leftarrow \frac{1}{{B_t}}\left[\sum_{i=1}^{B_t} \frac{g^t_i}{\max \{1, \|g^t_i\|/C\}} + \calN\left(0, \sigma^2C^2\cdot I\right)\right],$$ where $g^t_i = \nabla_{\theta}\left[ \log D_{\theta}(\bv_{r_i}^{\rm priv}) + \log \left(1-D_{\theta}(\bv_{s_i}^{\rm pub})\right)\right]$. 
 
 \STATE Update $m_t, v_t, \theta_t$ according to Adam.
 \ENDFOR
 \STATE \textbf{Output:} $D^{\dagger}_{\theta}:= \frac{D_{\theta}}{1-D_{\theta}}$.
\end{algorithmic} 
\caption{Differentially private training of density ratio estimator between public and private data.}
\label{alg:dp_training}
\end{algorithm}

The DP-MGE estimator proposed in Section \ref{subsec:norm_dist_est} is simple and sample-efficient under a wide range of privacy budgets. However, when $\calP^{\rm priv}_{\bv}$ is more complicated and multi-modal, modeling it as a unimodal Gaussian distribution can suffer from a high bias.

We address this issue in DP-DRE, where we make additional use of the public data to model the private feature distribution. We use the encoder to map the public dataset $D^{\rm pub}$ to a feature distribution $\calP^{\rm pub}$ and then model the difference between $\calP^{\rm pub}$ and $\calP^{\rm priv}$. For instance, if $D^{\rm pub}$ is the full ImageNet dataset and $D^{\rm priv}$ contains only birds, we only need to train a \emph{discriminator} to filter out all non-bird samples from ImageNet and use the features of the remaining samples to generate new images.

\textbf{Density ratio estimation.} To model the difference between the public and private data distributions, we propose estimating the ratio $\calP^{\rm priv}(\bv) / \calP^{\rm pub}(\bv)$ by training a discriminator (similar to those used in GAN training) $D_\theta$ to minimize:
\begin{equation}
    \label{eq:discriminator_loss}
    \min_{\theta} \frac{1}{n}\sum_{i=1}^n\left[ \log D_{\theta}(\bv_i^{\rm priv}) \right] + \frac{1}{m}\sum_{j=1}^m\left[ \log \left(1 - D_{\theta}(\bv_j^{\rm pub})\right) \right],
\end{equation}
where $\bv_i^{\rm priv}=h(\bx_i^{\rm priv})$ and $\bv_j^{\rm pub}=h(\bx_j^{\rm pub})$ are features for the $i^{th}$ private data and $j^{th}$ public data. The loss function for the discriminator $D_\theta$ in Equation \ref{eq:discriminator_loss} is the empirical loss for the objective:
$$
\bbE_{\bv\sim \calP^{\rm priv}}\left[ \log D_{\theta}(\bv) \right] + \bbE_{\bv\sim \calP^{\rm pub}}\left[ \log \left(1 - D_{\theta}(\bv)\right) \right].
$$
Importantly, \citet{goodfellow2014generative} showed that under this objective, the optimal discriminator is $D^*(\bv) = \frac{\calP^{\rm priv}(\bv)}{\calP^{\rm priv}(\bv) + \calP^{\rm pub}(\bv)}$. Hence using a trained discriminator with $D_{\theta}\approx D^*$, we can define the density ratio estimator $D^{\dagger}_{\theta}(\bv) := \frac{D_{\theta}(\bv)}{1-D_{\theta}(\bv)}$, with:
$$D^{\dagger}_{\theta}(\bv) = \frac{D_{\theta}(\bv)}{1-D_{\theta}(\bv)}\approx \frac{D^*(\bv)}{1-D^*(\bv)} = \frac{\calP^{\rm priv}(\bv)}{\calP^{\rm pub}(\bv)},$$
which recovers the desired density ratio.

\textbf{Sampling.}
The density ratio estimator allows us to approximate $\calP^{\rm priv}(\bv) \approx D_\theta^\dagger(\bv) \calP^{\rm pub}(\bv)$, which is a re-weighting of the public feature distribution $\calP^{\rm pub}$. Thus, to sample approximately from $\calP^{\rm priv}$, we can define an empirical distribution over $\{\bv_1^{\rm pub},\ldots,\bv_m^{\rm pub}\}$ with sampling probability $\bp = \frac{1}{N} (D_\theta^\dagger(\bv_1^{\rm pub}),\ldots,D_\theta^\dagger(\bv_m^{\rm pub}))$, where $N$ is a normalization factor. A new image from the private distribution can then be generated using the IC-GAN decoder.

\textbf{Private discriminator training.} To make the density ratio estimator training private, we employ DP-SGD~\citep{abadi2016deep} with the Adam optimizer~\citep{kingma2014adam}. We use the privacy accounting method in \citet{mironov2019r} and its implementation in \citet{opacus} for DP-SGD. Algorithm~\ref{alg:dp_training} gives a summarization of the private training algorithm.

\section{Experiment}

We evaluate our methods and demonstrate their practical applicability in realistic image generation. Our evaluation aims to answer the following questions: \textbf{1.} How do our methods compare with existing baselines in terms of standard image-generation metrics and image quality? \textbf{2.} How does DP-MGE compare with DP-DRE? \textbf{3.} How do DP-DRE and DP-MGE perform when the assumption about the distribution supports doesn't hold?

\subsection{Experiment set-up}

\textbf{Datasets.} For all our experiments, we use ImageNet \citep{deng2009imagenet} as the public data, with the pretrained ResNet50 feature extractor $h$ and IC-GAN $g$ from \citet{casanova2021instance}\footnote{The pretrained ResNet50 feature extractor is from \url{https://github.com/facebookresearch/ic_gan}, and we train the IC-GAN on face-blurred ImageNet using code from the same repo.}. We use multiple private datasets -- CIFAR10 (Cifar10; \citep{krizhevsky2009learning}), Oxford-IIIT Pet Dataset (Pet; \citep{parkhi2012cats}), Stanford Cars Dataset (Car; \citep{KrauseStarkDengFei-Fei_3DRR2013}), Caltech-UCSD Birds Dataset (Bird; \citep{wah2011caltech}), Nico+~\citep{zhang2022nico++} with Grass (Objects-grass) and Nico+ with Autumn (Objects-autumn). These are considerably smaller than Imagenet, and have sizes $50000$, $3680$, $5992$, $8144$, $16256$ and $7272$ respectively. Images in Cifar10 have resolution $32$ and we resize images from the other datasets to resolution $128$. We use the train split of each dataset for training the image generation algorithms and validation or test splits for evaluation.

\textbf{Algorithm set-up.} We evaluate both DP-MGE and DP-DRE. For DP-MGE, the normalization operator inside the feature extractor $h$, as implemented in the IC-GAN, ensures that the norm of the features $\|\bv_i\| \leq 1$, thus ensuring privacy (see Claim~\ref{clm:mge_dp}). For DP-DRE, we choose a two-layer MLP as our discriminator $D^{\dagger}_{\theta}$. More training details are provided in the appendix. 


\textbf{Baselines.} We compare our methods with the following three baselines that all utilize public data; details on how they are trained are provided in the appendix. In all cases, for fair comparison, we try to keep the architectures as close to the IC-GAN architecture as possible.
In addition to the private baselines below, we also report a non-private baseline: the scores for images generated from the IC-GAN when the private data is directly input to it. This is an upper bound for any IC-GAN based image generation algorithm.

\textit{1. DP finetuning on private data} (DP-GAN-FT) finetunes a pretrained public GAN on private data with differential privacy. For our experiment, we train a GAN on ImageNet in the feature space described by the pretrained feature extractor $h$. Both the generator and discriminator of this GAN are $4$-layer MLPs. We sequentially combine the generator of this trained GAN with the IC-GAN generator, and its discriminator with the IC-GAN discriminator to get a complete unconditional GAN on ImageNet. The combined generator and discriminator are then finetuned together on the private data using a differentially private version of Adam. 

\textit{2. DP mean embedding with perceptual features} (DP-MEPF; ~\citep{harder2022differentially}) uses the public data to extract pretrained features, and calculates the first and second moments of the private data in the feature space with differential privacy. A generative model is then trained to generate data that matches these moments. For CIFAR10, we follow the same set-up\footnote{We use code from \url{https://github.com/ParkLabML/DP-MEPF}.} as in \citet{harder2022differentially} -- the pre-trained features are the perceptual features from each layer of the VGG-19 network~\citep{simonyan2014very} and the generator is a ResNet. For the other datasets, we pretrain a ResNet50 on ImageNet. We use BigGAN as the architecture of the generator and set the deep features as the output of the layer right before the last pooling layer.

\textit{3. DP-GAN with model inversion} (DP-GAN-MI; ~\citep{chen2021differentially}) pretrains an unconditional GAN on the public data and then trains a differentially private GAN in its latent space. To generate a new image, it first generates a latent vector via the private GAN, and passes it to the pretrained GAN. We use this procedure with the IC-GAN architecture, and train a differentially private GAN in the latent space of the feature extractor $h$. The private generator and discriminator both are $4$-layer MLPs.

\begin{table*}[t]
    \centering
    \caption{FID score (lower is better) of generated data for 6 different private datasets.}
    \resizebox{0.95\linewidth}{!}{\
    \begin{tabular}{c|c|c|c|c|c|c|c|c|c|c}
    \toprule
        \multirow{2}{*}{\textbf{Method}} &  \multicolumn{5}{c|}{Cifar10} &  \multicolumn{5}{c}{ Pet}  \\
         \cmidrule{2-11}
         & $\varepsilon=\infty$    & $\varepsilon=10$  & $\varepsilon=3$   & $\varepsilon=1$  & $\varepsilon=0.1$   & $\varepsilon=\infty$    & $\varepsilon=10$  & $\varepsilon=3$   & $\varepsilon=1$  & $\varepsilon=0.1$   
         
          \\
        \midrule
        {\small Non-private IC-GAN}  &   \multicolumn{5}{c|}{16.6}	&   \multicolumn{5}{c}{29.7}  \\
        \midrule
        DP-GAN-FT & 33.8 & 36.7 & 36.7 & 38.3 & 38.3 & 46.8 & 103.3 & 103.0 & 104.7 & 103.6 \\
        DP-MEPF & 36.9 & 37.0 & 41.8 & 68.3 & 296.5 & 95.2	& 89.3 &	91.4 &	104.2 &	217.9 \\
        DP-GAN-MI & \textbf{19.5} & 97.2 & 94.4 & 75.6 & 150.7 & \textbf{29.1} & 127.2 & 189.6 & 209.7 & 192.2 \\
        \midrule
        DP-MGE & 33.9 & 43.3 & 39.9 & 39.9 & 51.1 & 77.1 & 74.8 & 76.1 & 109.7 & 166.4 \\
        DP-DRE & 19.8 & \textbf{21.2} & \textbf{21.0} & \textbf{20.9} & \textbf{20.9} & 30.7 & \textbf{33.0} & \textbf{34.3} & \textbf{32.7} & \textbf{93.7} \\
        \midrule
        \midrule
        \multirow{2}{*}{\textbf{Method}} &  \multicolumn{5}{c|}{Car} & \multicolumn{5}{c}{Bird}\\
         \cmidrule{2-11}
         & $\varepsilon=\infty$    & $\varepsilon=10$  & $\varepsilon=3$   & $\varepsilon=1$  & $\varepsilon=0.1$   & $\varepsilon=\infty$    & $\varepsilon=10$  & $\varepsilon=3$   & $\varepsilon=1$  & $\varepsilon=0.1$   
         
          \\
        \midrule
        {\small Non-private IC-GAN}  &   \multicolumn{5}{c|}{17.7} &  \multicolumn{5}{c}{20.5} \\
        \midrule
        DP-GAN-FT & 26.5 & 150.5 & 148.6 & 149.3 & 149.3 & 20.4 & 109.8 & 109.9 & 109.7 & 109.7 \\
        DP-MEPF & 58.4 & 	52.4 & 	44.5 & 	52.4 & 	140.5 & 52.9 & 	65.3 & 	65.0 & 67.1	 & 97.2 \\
        DP-GAN-MI & \textbf{15.8} & 38.8 & 62.6 & 239.9 & 239.1 & \textbf{19.4} & 71.1 & 78.0 & 206.2 & 190.4 \\
        \midrule
        DP-MGE & 20.0 & 18.8 & 19.6 & 43.1 & 203.3 & 27.1 & 26.3 & 27.1 & 64.7 & 166.5 \\
        DP-DRE & 17.8 & \textbf{18.6} & \textbf{18.4} & \textbf{17.3} & \textbf{68.3} & 20.7 & \textbf{21.6} & \textbf{23.0} & \textbf{24.3} & \textbf{86.6} \\
        \midrule
        \midrule
        \multirow{2}{*}{\textbf{Method}} &  \multicolumn{5}{c|}{ Objects-Grass}  & \multicolumn{5}{c}{Objects-Autumn}\\
         \cmidrule{2-11}
         & $\varepsilon=\infty$    & $\varepsilon=10$  & $\varepsilon=3$   & $\varepsilon=1$  & $\varepsilon=0.1$   & $\varepsilon=\infty$    & $\varepsilon=10$  & $\varepsilon=3$   & $\varepsilon=1$  & $\varepsilon=0.1$   
         
          \\
        \midrule
        {\small Non-private IC-GAN}  &   \multicolumn{5}{c|}{20.8} & \multicolumn{5}{c}{38.4}  \\
        \midrule
        DP-GAN-FT & 31.6 & 57.8 & 58.8 & 56.9 & 57.9 & 44.1 & 70.2 & 69.8 & 71.3 & 69.9 \\
        DP-MEPF & 92.3	 & 80.7 & 	76.7	 & 84.7	 & 101.2 & 85.5 &	82.7 &	95.0	 &126.6	 &159.9 \\
        DP-GAN-MI & \textbf{24.2} & 86.3 & 79.0 & 89.2 & 145.1 & \textbf{40.6} & 80.7 & 125.2 & 124.6 & 157.7 \\
        \midrule
        DP-MGE & 56.3 & 51.0 & 50.8 & 50.1 & 87.2 & 74.2 & 71.1 & 69.5 & 73.2 & 113.5 \\
        DP-DRE & 25.5 & \textbf{26.3} & \textbf{26.7} & \textbf{27.4} & \textbf{28.9} & 44.9 & \textbf{46.7} & \textbf{46.7} & \textbf{48.3} & \textbf{53.7} \\
        \bottomrule
    \end{tabular}
    }
    \vspace{-2ex}
    \label{tab:fid_score}
\end{table*}

\begin{table*}[t]
    \centering
    \vspace{-1ex}
    \caption{Precision and recall (higher is better) of our methods and baselines on Cifar10 dataset.} 

    \resizebox{0.95\linewidth}{!}{
    \begin{tabular}{c|cc|cc|cc|cc|cc}
    \toprule
    \multirow{2}{*}{\textbf{Method}}& \multicolumn{2}{c|}{$\varepsilon=\infty$}& \multicolumn{2}{c|}{$\varepsilon=10$}& \multicolumn{2}{c|}{$\varepsilon=3$}& \multicolumn{2}{c|}{$\varepsilon=1$}& \multicolumn{2}{c}{$\varepsilon=0.1$}\\
    & Prec. & Rec.  & Prec. & Rec.  & Prec. & Rec.  & Prec. & Rec.  & Prec. & Rec. \\
    \midrule
    Non-private IC-GAN & \multicolumn{10}{c}{Precision: 0.971, Recall: 0.969}\\
    \midrule
    DP-GAN-FT & 0.854 & 0.889 & 0.880 & 0.884 & 0.867 & 0.870 & 0.876 & 0.890 & 0.876 & 0.875 \\
    DP-MEPF  &  0.929 & 0.879 & 0.922 & 0.880 & 0.908 & 0.847 & 0.825 & 0.638 & 0.011 & 0.000\\
    DP-GAN-MI & \textbf{0.965} & \textbf{0.961} & 0.521 & 0.544 & 0.540 & 0.420 & 0.785 & 0.541 & 0.217 & 0.444 \\
    \midrule
    DP-MGE & 0.886 & 0.697 & 0.885 & 0.719 & 0.908 & 0.727 & 0.898 & 0.739 & 0.836 & 0.733 \\
    DP-DRE & 0.950 & 0.933 & \textbf{0.946} & \textbf{0.925} & \textbf{0.943} & \textbf{0.926} & \textbf{0.944} & \textbf{0.933} & \textbf{0.946} & \textbf{0.936} \\
    \bottomrule
    \end{tabular}
    }
    \vspace{-3ex}
    \label{tab:prec_rec}
\end{table*}

\begin{figure*}[t]
\centering
	\includegraphics[width=0.95\linewidth]{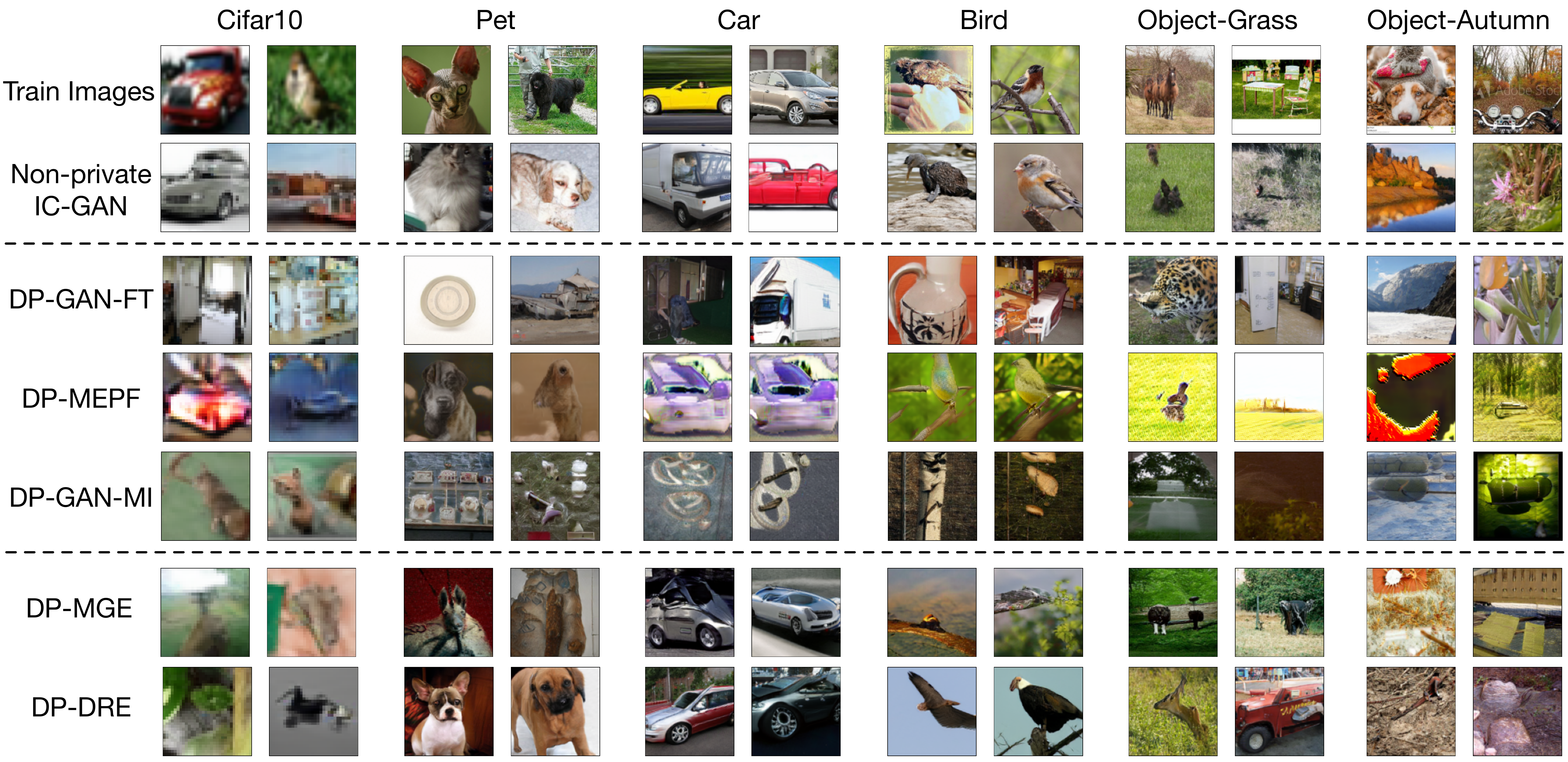}
	\vspace{-2ex}
	\caption{Examples of all algorithms ($\varepsilon=1$) across six datasets.}
	\label{fig:examples}
	\vspace{-3ex}
\end{figure*}

\textbf{Evaluation metrics.} Since there is no one perfect metric for evaluating generation quality, we use three popular metrics to measure the effectiveness of our methods: \textit{Frechet Inception Distance} (FID; lower is better) score~\citep{heusel2017gans}, \textit{Precision and Recall} (higher is better)~\citep{sajjadi2018assessing}, and \textit{Number of Different Bins} (NDB; lower is better)\citep{richardson2018gans}. The details of these metrics are in the appendix.

\subsection{Results}

We run methods on each dataset with $\varepsilon\in\{\infty, 10, 3, 1, 0.1\}$ and $\delta=10^{-5}$. 
Table~\ref{tab:fid_score} reports the FID scores.
Table~\ref{tab:prec_rec} and Figure~\ref{fig:ndb} present the precision and recall and NDB on Cifar10; the results for the other five datasets are presented in the appendix. 
We find that the conclusions drawn from the precision and recall and NDB tables largely agree with the results of the FID scores. 


\begin{wrapfigure}{r}{0.5\textwidth}
    \vspace{-3.5ex}
    \begin{minipage}{0.5\textwidth}
\centering
	\includegraphics[width=\linewidth]{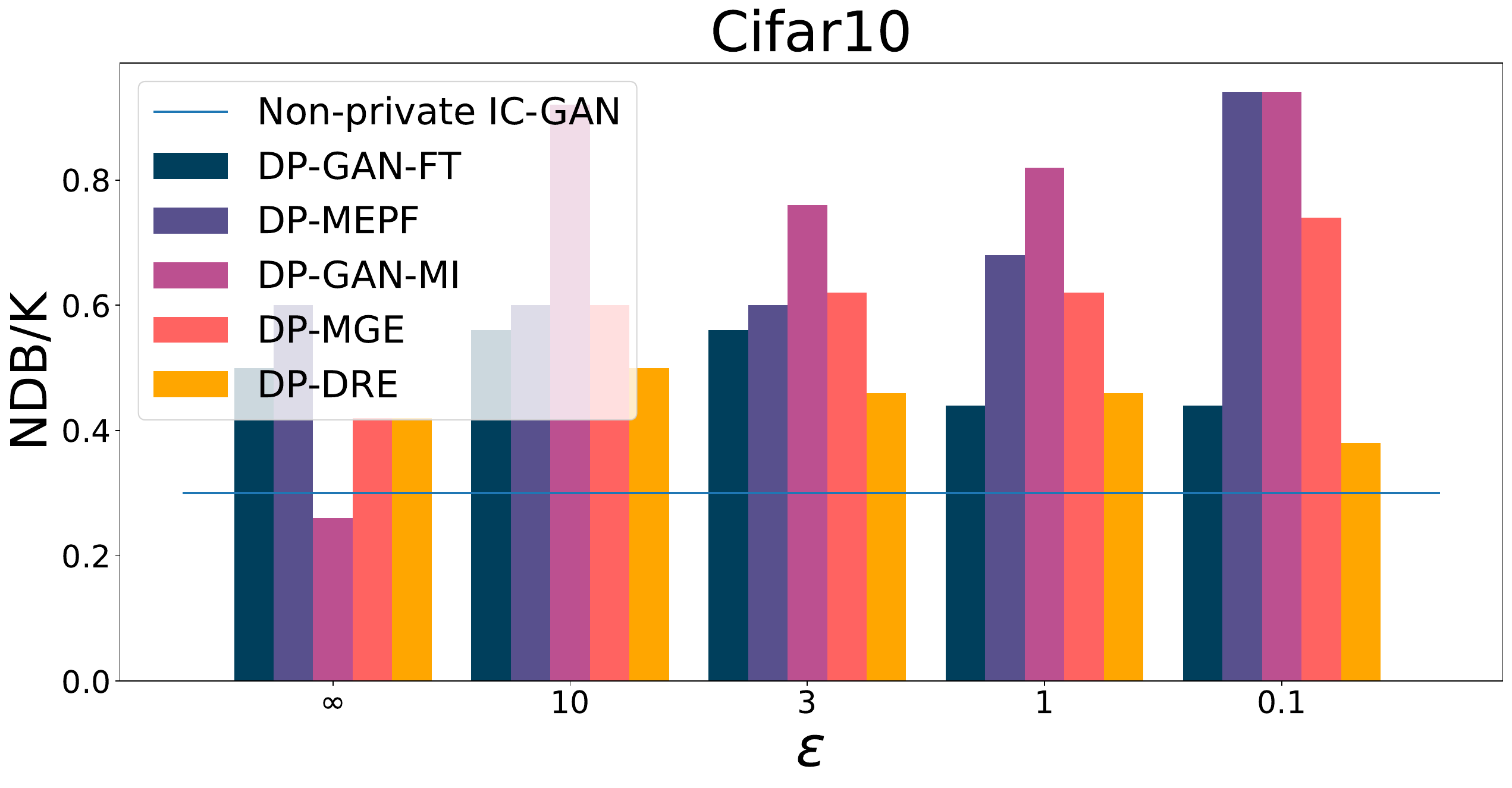}
	\vspace{-4ex}
	\caption{The percentage of different bins NDB/K (lower is better) of our methods and baselines on Cifar10 dataset.} 
	\vspace{-2ex}
	\label{fig:ndb}
    \end{minipage}
  \end{wrapfigure}

\textbf{Quantitative comparison with baselines.} Without privacy guarantee ($\varepsilon=\infty$), we see that DP-GAN-MI outperforms all other methods at all datasets, which shows that in the absence of privacy, GAN has the strongest ability to learn an arbitrary distribution. The FID scores of DP-DRE are close to DP-GAN-MI on all datasets; this suggests that the bias of DP-DRE is almost as small as that of a strong generative method such as a GAN. In contrast, DP-MGE suffers from a larger bias as expected. 


However, at higher privacy levels ($\varepsilon\leq 10$), our methods DP-MGE and DP-DRE fare better than all three baselines. DP-MEPF already has a large bias at $\varepsilon=\infty$ for all datasets except Cifar10. This might be because it is difficult to adapt to high resolution images. DP-GAN-FT and DP-GAN-MI also performs poorly with DP noise. The training process of GANs is known to be quite brittle, and so this might be because DP makes the training even harder. Moreover, the goal of DP-GAN-FT and DP-GAN-MI is hard: they are to learn the entire private distribution. In contrast, DP-MGE and DP-DRE have more stable training processes and simpler goals to learn: DP-MGE is to learn the first and second moments of the distribution in the feature space and DP-DRE is to learn the difference between the public and the private distributions.




\textbf{Image quality comparison with baselines.} Figure~\ref{fig:examples} shows the randomly generated examples for each algorithm with $\varepsilon=1$. More examples with different $\varepsilon$ are in the appendix. We see that DP-DRE and DP-MGE generate the most in-distribution images with high quality for all datasets. Other methods either generate many irrelevant out-of-distribution images (DP-GAN-FT, DP-GAN-MI) or have many artifacts (DP-MEPF). These artifacts may be due to the fact that DP-MEPF does not use a pretrained public encoder. 

\textbf{Comparison between DP-MGE and DP-DRE.} Due to the simple MLP with a small hidden layer size used in DP-DRE (see appendix), we expect DP-DRE to have a similar level of robustness to noise from DP, but have a smaller bias than DP-MGE. This is what we observe. The performance of both DP-MGE and DP-DRE doesn't drop until a relatively small $\varepsilon$ (e.g. 1.0), while DP-DRE is considerably better than DP-MGE on the datasets with multiple different objects (Cifar10, Pet, Objects-Grass, Objects-Autumn).

\begin{wraptable}{r}{0.4\textwidth}
    \centering
    \setlength{\tabcolsep}{4pt}
    \vspace{-1ex}
	\caption{The weight assigned to same semantic superclass.}
    \resizebox{\linewidth}{!}{%
    
	\begin{tabular}{c|ccccc}
	\toprule
		& $\varepsilon=\infty$ & $\varepsilon=10$ & $\varepsilon=3$ & $\varepsilon=1$ & $\varepsilon=0.1$\\
	\midrule
	Bird &	0.98 & 0.99 & 0.98 & 0.97 & 0.18\\
	Car & 0.97 & 0.96 & 0.96 & 0.98 & 0.58\\
	\bottomrule
	\end{tabular}
	}
        \vspace{-2ex}
	\label{tab:weight}
\end{wraptable}

\textbf{Sanity check for DP-DRE.} 
To check whether DP-DRE learns meaningful weights $\bp$, we compare DP-DRE with a naive baseline: uniformly sample feature vectors from ImageNet validation set and get images from IC-GAN with these feature vectors.
We evaluate this baseline on all six datasets and the FID scores are $37.9$ (Cifar10), $100.1$ (Pet), $147.5$ (Car), $111.3$ (Bird), $49.0$ (Objects-grass) and $63.5$ (Objects-autumn).
By comparing them with Table~\ref{tab:fid_score}, DP-DRE achieves much better FID score on all datasets when $\varepsilon \geq 1$.
This means that DP-DRE learns meaningful weights to approximate the private distribution.

Moreover, to see to what extent DP-DRE generates in-distribution images, we record the weights assigned by DP-DRE to images in the ImageNet validation set. For the Bird dataset, we sum up the weights for images belonging to the superclass \textit{bird}\footnote{The definition of superclass follows the ImageNet hierarchy at \url{https://observablehq.com/@mbostock/imagenet-hierarchy}}.
Similarly for the Car dataset, we sum up the weights belonging to the superclass \textit{wheeled vehicle} together with classes \{\textit{minibus}, \textit{school bus}, \textit{trolleybus}, \textit{car wheel}\}.
Table~\ref{tab:weight} shows the weights for these two datasets.
When the $\varepsilon\geq 1.0$ for both the Bird and the Car datasets, we see the weights are all above $96\%$. This indicates that at least $96\%$ of the images generated are close to that of Birds or Cars. 

\textbf{Image generation results when the assumption is severely violated.} The success of DP-DRE relies on the assumption about the support: the support of the private distribution is contained by the support of the public distribution. DP-DRE has great performance on the private datasets so far, because these datasets are closed to some subsets of ImageNet. We further considered two datasets very different from ImageNet: Chest X-Ray Images (Pneumonia)~\citep{kermany2018identifying} and Describable Textures Dataset~\citep{cimpoi14describing}. As expected, even the FID scores of non-private IC-GAN, which serves as the performance upper bound for DP-DRE, DP-MGE and DP-GAN-MI, are very high: 221.4 and 68.7. This show the necessity of the assumption for DP-MGE and DP-DRE.


\section{Related Work}
\textbf{DP image generation.} There has been a body of work on differentially private GAN-training solely from private data~\citep{xie2018differentially, jordon2018pate, chen2020gs, long2021g}. \citet{cao2021don} and \citet{harder2021dp} explore different DP-generation algorithms that use an alternative loss or match the first and second feature moments. Unfortunately, these work are far away from generating realistic images with the reliable DP guarantees: their generated MNIST images are very noisy even when $\varepsilon=10$. With the public data, our method can generate high resolution images with better quality even when $\varepsilon=1.0$.

\textbf{DP classifier training with the public data.} A line of work has looked into this setting in order to balance a privacy vs. classification accuracy tradeoff. Examples that public data and private data have the same distribution include the PATE framework~\citep{papernot2016semi, papernot2018scalable} as well as its extensions ~\citep{zhu2020private}. \citet{tramer2020differentially} studies the usage of public data which have different distribution from private data.

\textbf{Transfer learning in GAN.} GAN transfer has been investigated to train a generative model with the limited data. \citet{wang2018transferring, wang2020minegan, zhao2020leveraging} and \citet{mo2020freeze} propose different finetuning strategies to transfer knowledge from a pretrained unconditional GAN.
\citet{shahbazi2021efficient, laria2022transferring} and \citet{dinh2022improved} study the transfer between the conditional GAN. None of these work involve differential privacy.

\section{Conclusion}
This work studies how to use \emph{generic} large-scale public data to improve the differentially private image generation. Our new methods apply under the realistic assumption that the support of the public data contains the support of the private. Our empirical evaluations show that our methods achieve SOTA for DP image generation.

\textbf{Limitations and future work.} Methods proposed in this work rely on the performance of pretrained IC-GAN and the assumption of public and private support. Thus, one potential direction is to extend our work to more sophisticated generative methods such as diffusion models~\citep{song2019generative, ho2020denoising}. Another direction is to relax this assumption to wider varieties of public data, which may have more use-cases.


\bibliography{main}
\bibliographystyle{abbrvnat}

%
\newpage
\appendix
\onecolumn

\section{Proofs in Section 3}
\begin{claim}
	If $\|\bv^{\rm priv}_i\|\leq 1$ for $i=1, \cdots n$, then the estimators for $(\mu^{\rm dp}, \bs^{\rm dp})$ defined in Equation (1) and (2) are $(\varepsilon, \delta)$-DP w.r.t. the private image dataset $D^{\rm priv}$.
\end{claim}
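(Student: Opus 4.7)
The strategy is to view $(\bsmu^{\rm dp},\bs^{\rm dp})$ as a two-stage mechanism, analyze each stage separately via the Gaussian mechanism with half of the privacy budget, and then combine using basic composition, treating the final subtraction as post-processing. Concretely, I would introduce an intermediate quantity
\[
\tilde{\bs} = \frac{1}{n}\sum_{i=1}^n (\bv_i^{\rm priv})^2 + \calN\!\left(\mathbf{0},\frac{4\sigma_{\varepsilon/2,\delta/2}^2}{n^2}I\right),
\]
and observe that $\bs^{\rm dp} = \tilde{\bs} - (\bsmu^{\rm dp})^2$ depends on the private data only through $\bsmu^{\rm dp}$ and $\tilde{\bs}$. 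Hence it suffices to show that releasing the pair $(\bsmu^{\rm dp},\tilde{\bs})$ is $(\varepsilon,\delta)$-DP, after which post-processing closure finishes the argument.

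\textbf{Step 1: sensitivity bounds.} For neighboring datasets $D^{\rm priv},D^{\rm priv\prime}$ differing in one record $\bv$ versus $\bv'$, the $\ell_2$-sensitivity of $f_1(D^{\rm priv}) = \frac{1}{n}\sum_i \bv_i^{\rm priv}$ is at most $\tfrac{1}{n}\|\bv-\bv'\| \le \tfrac{2}{n}$ by the triangle inequality and the assumption $\|\bv_i^{\rm priv}\|\le 1$. For the second moment $f_2(D^{\rm priv}) = \frac{1}{n}\sum_i (\bv_i^{\rm priv})^2$, I use that $\|\bv\|\le 1$ implies $|v_j|\le 1$ coordinate-wise, hence $v_j^4 \le v_j^2$, and so $\|\bv^2\|_2^2 = \sum_j v_j^4 \le \sum_j v_j^2 = \|\bv\|_2^2 \le 1$. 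Therefore the sensitivity of $f_2$ is also bounded by $\tfrac{2}{n}$.

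\textbf{Step 2: per-stage privacy.} Both $\bsmu^{\rm dp}$ and $\tilde{\bs}$ are instances of the Gaussian mechanism applied to a function of $\ell_2$-sensitivity at most $2/n$, with noise standard deviation $\tfrac{2\sigma_{\varepsilon/2,\delta/2}}{n}$, i.e.\ $\sigma_{\varepsilon/2,\delta/2}$ times the sensitivity. The formula for $\sigma_{\varepsilon,\delta}$ is the standard Gaussian-mechanism calibration guaranteeing $(\varepsilon,\delta)$-DP at unit sensitivity (cf.\ Dwork--Roth), so each of these two releases is $(\varepsilon/2,\delta/2)$-DP individually.

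\textbf{Step 3: composition and post-processing.} By basic (sequential) composition the joint release $(\bsmu^{\rm dp},\tilde{\bs})$ is $(\varepsilon,\delta)$-DP with respect to $D^{\rm priv}$. The actual output $(\bsmu^{\rm dp},\bs^{\rm dp}) = (\bsmu^{\rm dp},\,\tilde{\bs}-(\bsmu^{\rm dp})^2)$ is a deterministic function of this pair that does not touch the private data again, so closure under post-processing yields the claimed $(\varepsilon,\delta)$-DP guarantee.

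\textbf{Main obstacle.} The only non-routine step is bounding the sensitivity of the element-wise squared features; the inequality $\|\bv^2\|\le\|\bv\|$ for $\|\bv\|\le 1$ must be stated carefully since it relies on coordinate-wise $|v_j|\le 1$, which fortunately is implied by the norm constraint. Everything else is a direct application of the Gaussian mechanism, basic composition, and post-processing, each of which can be cited rather than re-derived.
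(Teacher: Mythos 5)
Your overall architecture matches the paper's: bound the $\ell_2$-sensitivity of both the first and second empirical moments by $2/n$, apply the Gaussian mechanism to each with half the budget, and combine by basic composition. In two places you are actually more careful than the paper: you explicitly factor $\bs^{\rm dp}=\tilde{\bs}-(\bsmu^{\rm dp})^2$ through post-processing of the pair $(\bsmu^{\rm dp},\tilde{\bs})$ (the paper glosses over the dependence of $\bs^{\rm dp}$ on $\bsmu^{\rm dp}$), and you spell out why $\|\bv^2\|\le\|\bv\|\le 1$ via the coordinate-wise inequality $v_j^4\le v_j^2$.

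However, there is a genuine gap in your Step 2. The noise calibration $\sigma_{\varepsilon,\delta}=\frac{\sqrt{2\log 1/\delta + 2\varepsilon}+\sqrt{2\log 1/\delta}}{2\varepsilon}$ is \emph{not} the standard Dwork--Roth Gaussian-mechanism calibration, which is $\sigma=\sqrt{2\log(1.25/\delta)}/\varepsilon$ and, crucially, is only proved for $\varepsilon<1$. Since the claim must cover the regimes actually used (e.g.\ $\varepsilon=10$, hence $\varepsilon/2=5$ per stage), citing the classical theorem does not close the argument, and the classical formula would in any case give a different $\sigma$ than the one appearing in Equation (1). This calibration is precisely the one non-routine step, and the paper devotes the first half of its proof to it: the Gaussian mechanism with noise multiplier $\sigma$ satisfies $(\alpha,\alpha/(2\sigma^2))$-R\'enyi DP for every $\alpha>1$, which converts to $\left(\alpha/(2\sigma^2)+\tfrac{\log 1/\delta}{\alpha-1},\,\delta\right)$-DP; optimizing over $\alpha$ at $\alpha=1+\sqrt{2\sigma^2\log 1/\delta}$ yields $(\sqrt{2\log(1/\delta)/\sigma^2}+1/(2\sigma^2),\delta)$-DP, and inverting this relation in $\sigma$ produces exactly the stated $\sigma_{\varepsilon,\delta}$, valid for all $\varepsilon>0$. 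You need to supply this derivation (or cite a result of this form, such as the RDP-to-DP conversion) rather than attributing the formula to the classical Gaussian mechanism; as written, the central quantitative step of the claim is unjustified.
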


\begin{proof}[Proof of Claim 1]
	We will first show that the gaussian machanism $f(D) + \calN(0, S^2\sigma^2_{\varepsilon, \delta}I)$ with $\sigma_{\varepsilon, \delta}=\frac{\sqrt{2\log 1/\delta + 2\varepsilon} + \sqrt{2\log 1/\delta}}{2\varepsilon}$ guarantees $(\varepsilon, \delta)$-DP for any $\varepsilon > 0$. 
	Notice that this is different from the commonly used Gaussian mechanism where $\sigma_{\varepsilon, \delta}=\frac{\sqrt{2\log 1.25/\delta}}{\varepsilon}$~\citep{dwork2014algorithmic}, which only holds for $\varepsilon<1$. 
	According to Renyi-DP (RDP) paper, Gaussian mechanism guarantees $(\alpha, \alpha/(2\sigma^2))$-RDP, which is equivalent to $\left( \alpha/(2\sigma^2) + \frac{\log 1/\delta}{\alpha-1}, \delta \right)$-DP.
	$\alpha/(2\sigma^2) + \frac{\log 1/\delta}{\alpha-1}$ achieves the minimum $\sqrt{\frac{2\log 1/\delta}{\sigma^2}} + 1/(2\sigma^2)$ at $\alpha = 1 + \sqrt{2\sigma^2\log 1/\delta}$. 
	Thus the guassian mechanism guarantees $\left( \sqrt{\frac{2\log 1/\delta}{\sigma^2}} + 1/(2\sigma^2), \delta \right)$-DP. 
	In the other word, $\sigma_{\varepsilon, \delta}=\frac{\sqrt{2\log 1/\delta + 2\varepsilon} + \sqrt{2\log 1/\delta}}{2\varepsilon}$ guarantees $(\varepsilon, \delta)$-DP.
	
	With the above result, the only remaining thing is to derive the sensitivity of the $\frac{1}{n}\sum_{i=1}^n \bv_i^{\rm priv}$ and $\frac{1}{n}\sum_{i=1}^n \bv_i^{\rm priv}$.
	Suppose the only difference between two neibouring datasets is $\left(\bv_i^{\rm priv}, \left(\bv_i^{\rm priv}\right)'\right)$.
	The sensitivity of $\frac{1}{n}\sum_{i=1}^n \bv_i^{\rm priv}$ is
	$$\left\lVert\frac{1}{n}\sum_{i=1}^n \bv_i^{\rm priv} - \frac{1}{n}\sum_{i=1}^n \left(\bv_i^{\rm priv}\right)'\right\rVert = \frac{1}{n}\left\lVert \bv_i^{\rm priv} - \left(\bv_i^{\rm priv}\right)' \right\rVert \leq \frac{1}{n}\left\lVert \bv_i^{\rm priv} \right\rVert + \frac{1}{n}\left\lVert\left(\bv_i^{\rm priv}\right)' \right\rVert\leq \frac{2}{n}.$$
	The sensitivity of $\frac{1}{n}\sum_{i=1}^n (\bv_i^{\rm priv})^2$ is
	$$\left\lVert\frac{1}{n}\sum_{i=1}^n (\bv_i^{\rm priv})^2 - \frac{1}{n}\sum_{i=1}^n \left(\left(\bv_i^{\rm priv}\right)'\right)^2\right\rVert \leq \frac{1}{n}\left\lVert (\bv_i^{\rm priv})^2  \right\rVert + \frac{1}{n}\left\lVert\left(\left(\bv_i^{\rm priv}\right)'\right)^2 \right\rVert\leq \frac{2}{n}$$
	Then $\mu^{\rm dp}$ and $\bs^{\rm dp}$ are $(\varepsilon/2, \delta/2)$-DP and they together are $(\varepsilon, \delta)$-DP
	
\end{proof}

\section{Details of Metrics NDB}
For completeness, we revisit the details of evaluation metric.

The \textit{Frechet Inception Distance} (FID; \citep{heusel2017gans}) Score measures the distance between the generated distribution and the target distribution by comparing the means and variances in a feature space computed from a pretrained Inception V3 model.  

The \textit{Precision and Recall}~\citep{sajjadi2018assessing} metric evaluates the distance between the target and the generated distributions along two separate dimensions -- precision, which measures the sample quality of the generation algorithm, and recall, which measures the proportion of the target distribution covered by the generated distribution. These two are evaluated in a feature space after clustering; as suggested by~\cite{sajjadi2018assessing}, we choose $K=20$ clusters and report the maximum of $F_8$ and $F_{1/8}$ for precision and recall.

The \textit{Number of Different Bins} (NDB; \citep{richardson2018gans}) compares the histogram of two distributions in the pixel space after clustering samples from the target distribution into bins. The detailed steps are:
\begin{enumerate}
	\item Cluster samples from the target distribution in the pixel space.
	\item For each cluster $k$, we calculate the proportion $p_k$ of target samples that are assigned to this cluster
	\item Assign the generated samples to the closest clusters and compute the proportion $p_k'$ similarly.
	\item Measure the number of clusters that $p_k$ and $p_k'$ are significantly different.
\end{enumerate}
When the learned distribution is closer to the target distribution, the clusters that $p_k$ and $p_k'$ are significantly different are less.
We choose $K=50$ clusters for the evaluation in our experiment.

\section{Implementation Details of Method and Baselines}
\begin{algorithm}[!t]
\begin{algorithmic}[1]
 \STATE  \textbf{Input:} 
Data points $\{\bx_1, \cdots, \bx_n\}$, DP parameters $(\varepsilon, \delta)$
 \STATE \textbf{Training hyperparameters:} total training iteration $T$, learning rate $\eta$, batch size $B$, norm bound $C$.
 \STATE Initialize the weight $\theta=(\theta^{\rm gen}, \theta^{\rm dis})$, the $1^{\rm st}$ moment vector $m = (m^{\rm gen}, m^{\rm dis})$, the $2^{\rm nd}$ moment vector $v = (v^{\rm gen}, v^{\rm dis})$.
 \STATE Compute the proper $\sigma$ that guarantees the output to be $(\varepsilon, \delta)$-DP from \cite{mironov2019r, opacus}.
 \FOR{$t = 1,\cdots, T$}
 \FOR{$\tau = 1, \cdots 5$}
 \STATE Sample a batch of real data points $\{\bx_{r_1}, \cdots \bx_{r_{B_t}}\}$ with the poisson sample rate $B/n$
 \STATE Uniformly sample $B_t$ fake data points $\{\bx_{1}', \cdots \bx_{B_t}'\}$, where $\bx_{i}' = G_{\theta^{\rm gen}}(\bz_i)$ is sampled from current generative model $G_{\theta^{\rm gen}}$.
 
 \STATE  Compute the gradient
 $$g^t \leftarrow \frac{1}{B_t}\left[\sum_{i=1}^{B_t} \frac{g^t_i}{\max \{1, \|g^t_i\|/C\}} + \calN\left(0, \sigma^2C^2\cdot I\right)\right],$$ 
 where $g^t_i = \nabla_{\theta^{\rm dis}}\left[ D_{\theta^{\rm dis}}(\bx_{r_i})-D_{\theta^{\rm dis}}(\bx_{i}') +\|\frac{\partial}{\partial {\bx_{r_i}'}}D_{\theta^{\rm dis}}(\bx_{r_i})\|\right]$. 
 \STATE Update $m^{\rm dis}, v^{\rm dis}, \theta^{\rm dis}$ according to Adam.
 \ENDFOR
 \STATE Uniform sample B fake data points $G_{\theta^{\rm gen}}(\bz_1), \cdots, G_{\theta^{\rm gen}}(\bz_B)$
 \STATE $g^t \leftarrow \frac{1}{B}\sum_{i=1}^B\nabla_{\theta^{\rm gen}}\left[D_{\theta^{\rm dis}}(G_{\theta^{\rm gen}}(\bz_i))\right]$
 \STATE Update $m^{\rm gen}, v^{\rm gen}, \theta^{\rm gen}$ according to Adam.
 
 \ENDFOR
 \STATE \textbf{Output:} $G_{\theta^{\rm gen}}$.
\end{algorithmic} 
\caption{Differentially private training of GAN.}
\label{alg:dp_gan_training}
\end{algorithm}

We introduce the implementation details of different methods in the next paragraphs.
In addition, DP-GAN-FT and DP-GAN-MI follow the same training procedures shown in Algorithm~\ref{alg:dp_gan_training}, except that DP-GAN-FT is initialized with a pretrained GAN while DP-GAN-MI is randomly initialized.

\textbf{DP-DRE:} The size of the validation set $V$ used in DP-DRE is $50000$, $50$ images per class in the ImageNet. The hidden widths $w$ of MLP are selected from $\{1, 4, 16\}$. As for the training hyperparameters in Algorithm~1 (main paper), we select the total training iteration $T\in\{3\times 10^3, 10^4, 3\times 10^4\}$, the learning rate $\eta\in 10^{-3}, 10^{-4}$ the batch size $B\in\{64, 256\}$, and the norm bound $C$ as $1.0$.

\textbf{DP-GAN-FT:} 
	The generator $g^{\rm latent}$ in the GAN of the public latent distribution has the architecture of $4$-layer MLP with the hidden width $1024$ and latent dimensionality $128$.
    The paired discriminator $D^{\rm latent}$ is also a $4$-layer MLP with the hidden width $1024$.
    The GAN of public latent distribution is optimized by Adam with the learning rate of $10^{-4}$.
    As for the finetuning, we select batch size $B\in\{4, 16, 64\}$ and set the bounded norm $C$ as $1.0$.
    The learning rate $\eta$ is selected from $\{10^{-5}, 10^{-6}, 10^{-7}\}$.
    The total number of training iterations are selected from $\{3\times 10^3, 3\times 10^4\}$.
    We evaluate the result every $30$ epochs and save the \emph{best} checkpoint along each training.
    	
\textbf{DP-MEPF:} 
The learning rate $\eta$ is selected from $\{10^{-4}, 10^{-5}, 10^{-6}\}$ and the remaining settings follow the code released by \cite{harder2022differentially}.
During the training, the checkpoint is saved every 20,000 iterations.
We save and present the \emph{best} checkpoint along each training. 

\textbf{DP-GAN-MI:} 
The private generator and discriminator both are $4$-layer MLPs with latent dimension $25$ and hidden width $w$ selected from $\{32, 128\}$.
The learning rate $\eta$, the total number of training iterations $T$ and the batch size $B$ are selected from $\{10^{-3}, 10^{-4}\}$, $\{3\times 10^3, 10^4, 10^5\}$ and $\{4, 16, 64\}$. The bounded norm $C$ is set as $1.0$.

The evaluation results are based on the hyperparameter with the best FID score.
Table~\ref{tab:opt_para} shows the exact hyperparameter set-up that is the optimal in the hyperparameter searching space.

\begin{table}[t]
\centering
    \resizebox{\linewidth}{!}{
	\begin{tabular}{c|c|cccc|ccc|c|cccc}
	\toprule
		\multirow{2}{*}{\textbf{Dataset}} & \multirow{2}{*}{$\varepsilon$} & \multicolumn{4}{c|}{DP-DRE} & \multicolumn{3}{c|}{DP-GAN-FT} & \multicolumn{1}{c|}{DP-MEPF} & \multicolumn{4}{c}{DP-GAN-MI}\\
		\cmidrule{3-14}
		& & $w$ & $T$ & $B$ & $\eta$ & $\eta$ & $T$ & $B$ & $\eta$ & $w$ & $T$ & $B$ & $\eta$ \\
		\midrule
		\multirow{5}{*}{\textbf{Cifar10}}  & $\infty$ & 16&	3000&	64&	0.0001 & $10^{-7}$ & 30000 & 4 & $10^{-5}$ & 128&	100000&	64&	0.001 \\
 & $10$ & 1&	3000&	64&	0.001 & $10^{-6}$ & 3000 & 16 & $10^{-5}$ & 128&	100000&	64&	0.0001 \\
 & $3$ & 1&	3000&	64&	0.001 & $10^{-6}$ & 30000 & 4 & $10^{-5}$ & 32&	10000&	64&	0.001 \\
 & $1$ & 1&	3000&	64&	0.001 & $10^{-6}$ & 3000 & 4 & $10^{-5}$ & 32&	3000&	64&	0.001 \\
 & $0.1$ & 16&	3000&	64&	0.001 & $10^{-6}$ & 30000 & 4 & $10^{-5}$ & 128&	10000&	64&	0.001 \\
		 \midrule
		 \multirow{5}{*}{\textbf{Pet}} & $\infty$ & 4&	10000&	256&	0.0001 & $10^{-6}$ & 30000 & 16 & $10^{-6}$ & 128&	100000&	64&	0.0001\\
 & $10$ & 16&	3000&	64&	0.001 & $10^{-7}$ & 30000 & 4 & $10^{-5}$ & 32&	3000&	64&	0.001 \\
 & $3$ & 16&	3000&	64&	0.001 & $10^{-7}$ & 30000 & 4 & $10^{-6}$ & 128&	3000&	4&	0.001 \\
 & $1$ & 4&	10000&	64&	0.001 & $10^{-6}$ & 30000 & 4 & $10^{-6}$ & 128&	10000&	16&	0.001 \\
 & $0.1$ & 16&	10000&	64&	0.001 & $10^{-7}$ & 30000 & 4 & $10^{-6}$ & 128&	10000&	16&	0.001 \\
		 \midrule
		 \multirow{5}{*}{\textbf{Car}} & $\infty$ & 1&	10000&	64&	0.001 & $10^{-6}$ & 30000 & 16 & $10^{-6}$ & 128&	100000&	4&	0.0001 \\
 & $10$ & 1&	30000&	256&	0.001 & $10^{-7}$ & 3000 & 16 & $10^{-5}$ & 32&	10000&	64&	0.001 \\
 & $3$ & 4&	10000&	64&	0.001 & $10^{-7}$ & 30000 & 4 & $10^{-6}$ & 32&	3000&	64&	0.001 \\
 & $1$ & 4&	30000&	64&	0.001 & $10^{-6}$ & 30000 & 16 & $10^{-6}$ & 128&	10000&	64&	0.001 \\
 & $0.1$ & 16&	10000&	64&	0.001 & $10^{-6}$ & 30000 & 4 & $10^{-6}$ & 128&	3000&	16&	0.001 \\
		 \midrule
		 \multirow{5}{*}{\textbf{Bird}} & $\infty$ & 1&	10000&	64&	0.001 & $10^{-6}$ & 30000 & 4 & $10^{-6}$ & 32&	100000&	64&	0.0001 \\
 & $10$ & 1&	30000&	64&	0.001 & $10^{-5}$ & 3000 & 16 & $10^{-5}$ & 32&	10000&	64&	0.001 \\
 & $3$ & 1&	30000&	64&	0.001 & $10^{-7}$ & 30000 & 4 & $10^{-5}$ & 32&	3000&	64&	0.001 \\
 & $1$ & 1&	30000&	64&	0.001 & $10^{-7}$ & 30000 & 4 & $10^{-5}$ & 32&	3000&	16&	0.001 \\
 & $0.1$ & 16&	10000&	64&	0.001 & $10^{-7}$ & 30000 & 4 & $10^{-6}$ & 128&	3000&	16&	0.001 \\
		 \midrule
		 \multirow{5}{*}{\textbf{Objects-Grass}} & $\infty$ & 1&	3000&	64&	0.001 & $10^{-6}$ & 30000 & 16 & $10^{-5}$ & 128&	100000&	64&	0.0001 \\
 & $10$ & 1&	3000&	256&	0.001 & $10^{-6}$ & 30000 & 16 & $10^{-5}$ & 32&	3000&	16&	0.001 \\
 & $3$ & 1&	3000&	256&	0.001& $10^{-7}$ & 3000 & 4 & $10^{-5}$ & 32&	3000&	64&	0.001 \\
 & $1$ & 1&	3000&	64&	0.001 & $10^{-7}$ & 3000 & 4 & $10^{-5}$ & 32&	3000&	64&	0.001 \\
 & $0.1$ & 4&	10000&	64&	0.001 & $10^{-7}$ & 30000 & 16 & $10^{-6}$ & 128&	3000&	16&	0.001 \\
		 \midrule
		 \multirow{5}{*}{\textbf{Objects-Autumn}} & $\infty$ & 1&	30000&	64&	0.0001 & $10^{-7}$ & 30000 & 16 & $10^{-6}$ & 128&	100000&	64&	0.0001 \\
 & $10$ & 1&	3000&	64&	0.001 & $10^{-6}$ & 30000 & 16 & $10^{-5}$ & 32&	3000&	64&	0.001 \\
 & $3$ & 1&	3000&	256&	0.001 & $10^{-7}$ & 3000 & 4 & $10^{-6}$ & 128&	100000&	4&	0.0001 \\
 & $1$ & 1&	3000&	256&	0.001 & $10^{-7}$ & 30000 & 16 & $10^{-5}$ & 32&	3000&	64&	0.001 \\
 & $0.1$ & 16&	10000&	64&	0.001 & $10^{-7}$ & 3000 & 4 & $10^{-6}$ & 32&	100000&	64&	0.0001 \\
		 \midrule
	\end{tabular}
	}
	\caption{Optimal hyperparameter set-up for DP-DRE, DP-GAN-FT, DP-MEPF and DP-GAN-MI.}
	\label{tab:opt_para}
\end{table}

\section{Additional Experiment Results}
\subsection{Evaluation on More Datasets}
In the main paper, we show the Precision and Recall and NDB evaluation results only on Cifar10 dataset.
Table~\ref{tab:prec_rec_all} and Figure~\ref{fig:ndb_all} present the the results of these two evaluations on the remaining datasets.
The results match the tendency of FID score (shown in the main paper): DP-GAN-MI sometimes does the best without privacy guarantee $\varepsilon=\infty$, while our two methods DP-MGE and DP-DRE are better than all baselines when $\varepsilon\leq 10$; Moreover, DP-MGE is comparable with DP-DRE when the dataset is unimodal such as bird and car, while DP-DRE is much better than DP-MGE when the dataset becomes more complicated.

\begin{table*}
    \centering
    \resizebox{\linewidth}{!}{
    \begin{tabular}{c|cc|cc|cc|cc|cc}
    \toprule
    \multirow{3}{*}{\textbf{Method}} & \multicolumn{10}{c}{Pet}\\
    \cmidrule{2-11}
     & \multicolumn{2}{c|}{$\varepsilon=\infty$}& \multicolumn{2}{c|}{$\varepsilon=10$}& \multicolumn{2}{c|}{$\varepsilon=3$}& \multicolumn{2}{c|}{$\varepsilon=1$}& \multicolumn{2}{c}{$\varepsilon=0.1$}\\
    & Prec. & Rec.  & Prec. & Rec.  & Prec. & Rec.  & Prec. & Rec.  & Prec. & Rec. \\
    \midrule
    Non-private IC-GAN & \multicolumn{10}{c}{Precision: 0.880, Recall: 0.958}\\
    \midrule
        DP-GAN-FT  & 0.797 & 0.854 & 0.000 & 0.000 & 0.425 & 0.487 & 0.354 & 0.448 & 0.388 & 0.461 \\
        DP-MEPF  & 0.444 & 0.425 & 0.568 & 0.566 & 0.510 & 0.574 & 0.452 & 0.361 & 0.013 & 0.000\\
        DP-GAN-MI  & \textbf{0.888} & \textbf{0.951} & 0.255 & 0.181 & 0.092 & 0.083 & 0.080 & 0.117 & 0.060 & 0.127 \\
        \midrule
        DP-MGE & 0.542 & 0.452 & 0.606 & 0.473 & 0.642 & 0.468 & 0.490 & 0.417 & 0.143 & 0.124 \\
        DP-DRE  & 0.870 & 0.921 & \textbf{0.853} & \textbf{0.905} & \textbf{0.874} & \textbf{0.912} & \textbf{0.867} & \textbf{0.919} & \textbf{0.491} & \textbf{0.613} \\
    \midrule
    \midrule
    \multirow{3}{*}{\textbf{Method}} & \multicolumn{10}{c}{Car}\\
    \cmidrule{2-11}
     & \multicolumn{2}{c|}{$\varepsilon=\infty$}& \multicolumn{2}{c|}{$\varepsilon=10$}& \multicolumn{2}{c|}{$\varepsilon=3$}& \multicolumn{2}{c|}{$\varepsilon=1$}& \multicolumn{2}{c}{$\varepsilon=0.1$}\\
    & Prec. & Rec.  & Prec. & Rec.  & Prec. & Rec.  & Prec. & Rec.  & Prec. & Rec. \\
    \midrule
    Non-private IC-GAN & \multicolumn{10}{c}{Precision: 0.900, Recall: 0.948}\\
    \midrule
        DP-GAN-FT  & 0.810 & 0.902 & 0.115 & 0.109 & 0.127 & 0.138 & 0.122 & 0.112 & 0.131 & 0.120 \\
        DP-MEPF  &  0.453 & 0.648 & 0.459 & 0.421 & 0.682 & 0.683 & 0.388 & 0.504 & 0.041 & 0.002\\
        DP-GAN-MI  & 0.925 & \textbf{0.947} & 0.712 & 0.716 & 0.594 & 0.495 & 0.025 & 0.002 & 0.024 & 0.001 \\
        \midrule
        DP-MGE  & 0.903 & 0.855 & 0.899 & 0.907 & 0.889 & 0.868 & 0.739 & 0.783 & 0.041 & 0.021 \\
        DP-DRE  & \textbf{0.927} & 0.934 & \textbf{0.904} & \textbf{0.925} & \textbf{0.898} & \textbf{0.883} & \textbf{0.913} & \textbf{0.888} & \textbf{0.465} & \textbf{0.880} \\
    \midrule
    \midrule
    \multirow{3}{*}{\textbf{Method}} & \multicolumn{10}{c}{Bird}\\
    \cmidrule{2-11}
     & \multicolumn{2}{c|}{$\varepsilon=\infty$}& \multicolumn{2}{c|}{$\varepsilon=10$}& \multicolumn{2}{c|}{$\varepsilon=3$}& \multicolumn{2}{c|}{$\varepsilon=1$}& \multicolumn{2}{c}{$\varepsilon=0.1$}\\
    & Prec. & Rec.  & Prec. & Rec.  & Prec. & Rec.  & Prec. & Rec.  & Prec. & Rec. \\
    \midrule
    Non-private IC-GAN & \multicolumn{10}{c}{Precision: 0.897, Recall: 0.946}\\
    \midrule
        DP-GAN-FT  & 0.904 & 0.840 & 0.329 & 0.662 & 0.323 & 0.665 & 0.339 & 0.629 & 0.360 & 0.650 \\
        DP-MEPF  &  0.799 & 0.834 & 0.649 & 0.759 & 0.635 & 0.728 & 0.543 & 0.692 & 0.262 & 0.169\\
        DP-GAN-MI  & \textbf{0.935} & 0.878 & 0.564 & 0.486 & 0.488 & 0.486 & 0.150 & 0.086 & 0.094 & 0.077 \\
        \midrule
        DP-MGE  & 0.928 & 0.707 & \textbf{0.940} & 0.712 & \textbf{0.919} & 0.744 & 0.720 & 0.736 & 0.230 & 0.167 \\
        DP-DRE  & 0.925 & \textbf{0.959} & 0.902 & \textbf{0.933} & 0.895 & \textbf{0.937} & \textbf{0.879} & \textbf{0.931} & \textbf{0.517} & \textbf{0.765} \\
    \midrule
    \midrule
    \multirow{3}{*}{\textbf{Method}} & \multicolumn{10}{c}{Objects-Grass}\\
    \cmidrule{2-11}
     & \multicolumn{2}{c|}{$\varepsilon=\infty$}& \multicolumn{2}{c|}{$\varepsilon=10$}& \multicolumn{2}{c|}{$\varepsilon=3$}& \multicolumn{2}{c|}{$\varepsilon=1$}& \multicolumn{2}{c}{$\varepsilon=0.1$}\\
    & Prec. & Rec.  & Prec. & Rec.  & Prec. & Rec.  & Prec. & Rec.  & Prec. & Rec. \\
    \midrule
    Non-private IC-GAN & \multicolumn{10}{c}{Precision: 0.976, Recall: 0.970}\\
    \midrule
        DP-GAN-FT  & 0.962 & 0.892 & 0.743 & 0.785 & 0.731 & 0.763 & 0.770 & 0.800 & 0.767 & 0.788 \\
        DP-MEPF  &  0.631 & 0.532 & 0.777 & 0.559 & 0.725 & 0.657 & 0.620 & 0.519 & 0.314 & 0.279\\
        DP-GAN-MI  & 0.963 & 0.948 & 0.551 & 0.464 & 0.636 & 0.505 & 0.627 & 0.517 & 0.171 & 0.291 \\
        \midrule
        DP-MGE  & 0.791 & 0.682 & 0.844 & 0.756 & 0.859 & 0.759 & 0.855 & 0.767 & 0.602 & 0.624 \\
        DP-DRE  & \textbf{0.967} & \textbf{0.967} & \textbf{0.968} & \textbf{0.952} & \textbf{0.969} & \textbf{0.956} & \textbf{0.955} & \textbf{0.949} & \textbf{0.950} & \textbf{0.946} \\
    \midrule
    \midrule
    \multirow{3}{*}{\textbf{Method}} & \multicolumn{10}{c}{Objects-Autumn}\\
    \cmidrule{2-11}
     & \multicolumn{2}{c|}{$\varepsilon=\infty$}& \multicolumn{2}{c|}{$\varepsilon=10$}& \multicolumn{2}{c|}{$\varepsilon=3$}& \multicolumn{2}{c|}{$\varepsilon=1$}& \multicolumn{2}{c}{$\varepsilon=0.1$}\\
    & Prec. & Rec.  & Prec. & Rec.  & Prec. & Rec.  & Prec. & Rec.  & Prec. & Rec. \\
    \midrule
    Non-private IC-GAN & \multicolumn{10}{c}{Precision: 0.967, Recall: 0.956}\\
    \midrule
        DP-GAN-FT  & 0.958 & 0.896 & 0.798 & 0.717 & 0.820 & 0.714 & 0.752 & 0.723 & 0.767 & 0.720 \\
        DP-MEPF  &  0.668 & 0.668 & 0.747 & 0.737 & 0.549 & 0.542 & 0.269 & 0.179 & 0.116 & 0.062\\
        DP-GAN-MI  & \textbf{0.956} & \textbf{0.955} & 0.679 & 0.457 & 0.318 & 0.294 & 0.378 & 0.354 & 0.157 & 0.312 \\
        \midrule
        DP-MGE  & 0.753 & 0.643 & 0.783 & 0.687 & 0.819 & 0.732 & 0.804 & 0.736 & 0.409 & 0.632 \\
        DP-DRE  & 0.930 & 0.923 & \textbf{0.938} & \textbf{0.898} & \textbf{0.936} & \textbf{0.908} & \textbf{0.941} & \textbf{0.884} & \textbf{0.910} & \textbf{0.849}\\
    \bottomrule
    \end{tabular}
    }
    \caption{Precision and recall (higher is better) of our methods and baselines on Pet, Bird, Car, Objects-Grass and Objects-Autumn.}
    \label{tab:prec_rec_all}
\end{table*}

\begin{figure}[t]
\centering
	\includegraphics[width=\linewidth]{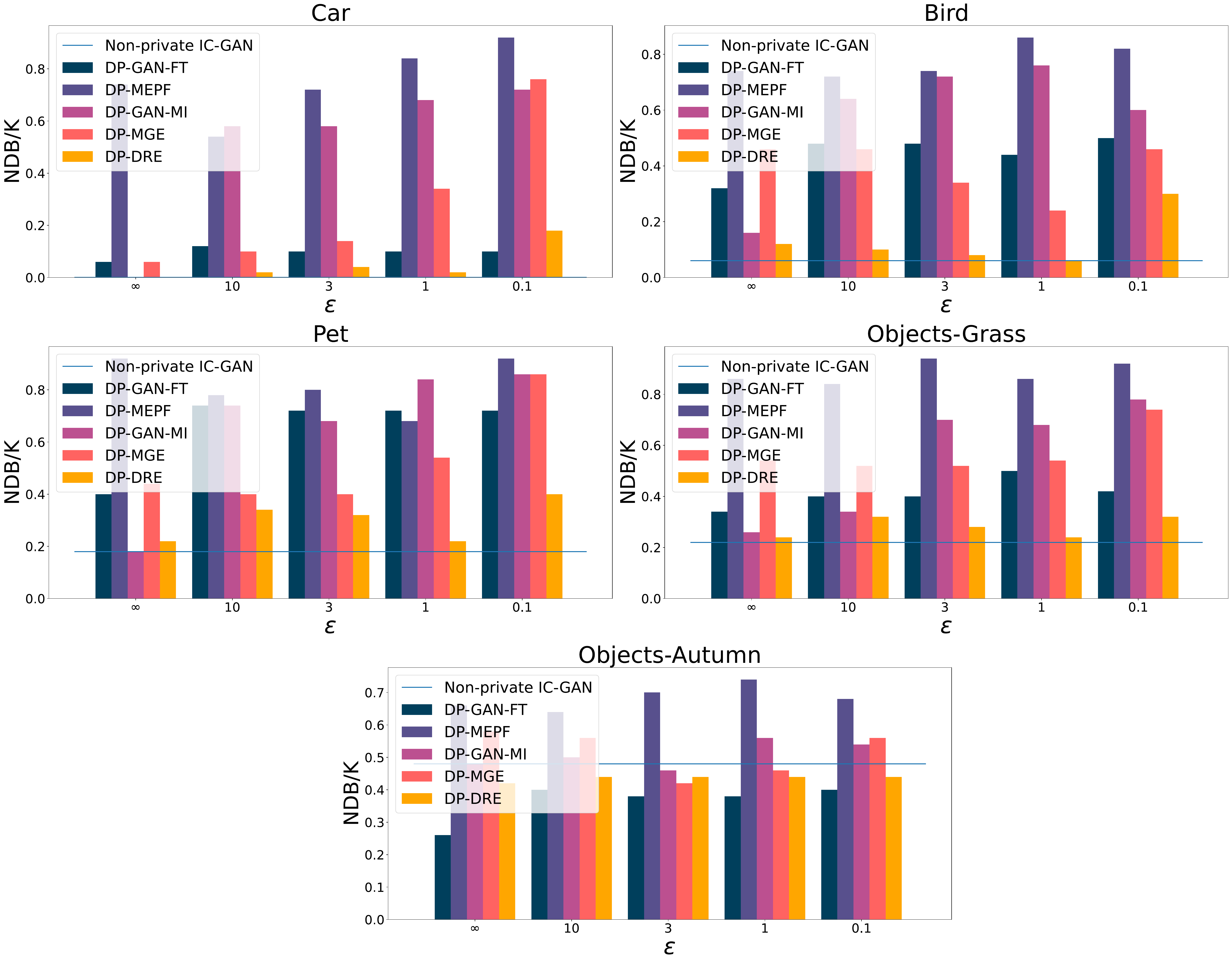}
	\caption{The percentage of different bins NDB/K (lower is better) of our methods and baselines on Pet, Bird, Car, Objects-Grass and Objects-Autumn.}
	\label{fig:ndb_all}
\end{figure}

\subsection{Generated Examples}
In the main paper, we show the examples generated from different algorithms when $\varepsilon=1$. Figure~\ref{fig:examples_1e6} Figure~\ref{fig:examples_10.0}, Figure~\ref{fig:examples_3.0}, and Figure~\ref{fig:examples_0.1} show the generation results for $\varepsilon=\infty, 10, 3, 0.1$.

By checking these examples, we find that when $\varepsilon=1$ (in Figure 2 in the main paper), DP-MGE and DP-DRE still generate related objects (dataset Pet, Car, Bird) or contexts (Objects-Grass), but they fail when $\varepsilon=0.1$.
This trade-off is better than the baselines. 
Two GAN related baselines DP-GAN-FT and DP-GAN-MI are not capable to generate in-distribution images for some datasets when $\varepsilon\leq 10$.
DP-MEPF generates good images on Cifar10 but images with many artifacts on the remaining high-resolution dataset.

One observation for DP-DRE is that it doesn't generate very autumn-like images for the Objects-Autumn dataset even if $\varepsilon=\infty$.
We hypothesize that the reason would be the feature extractor doesn't perfectly capture the autumn features, because Non-private IC-GAN fails to generate autumn images as well. 
%

\begin{figure}[t]
\centering
	\includegraphics[width=\linewidth]{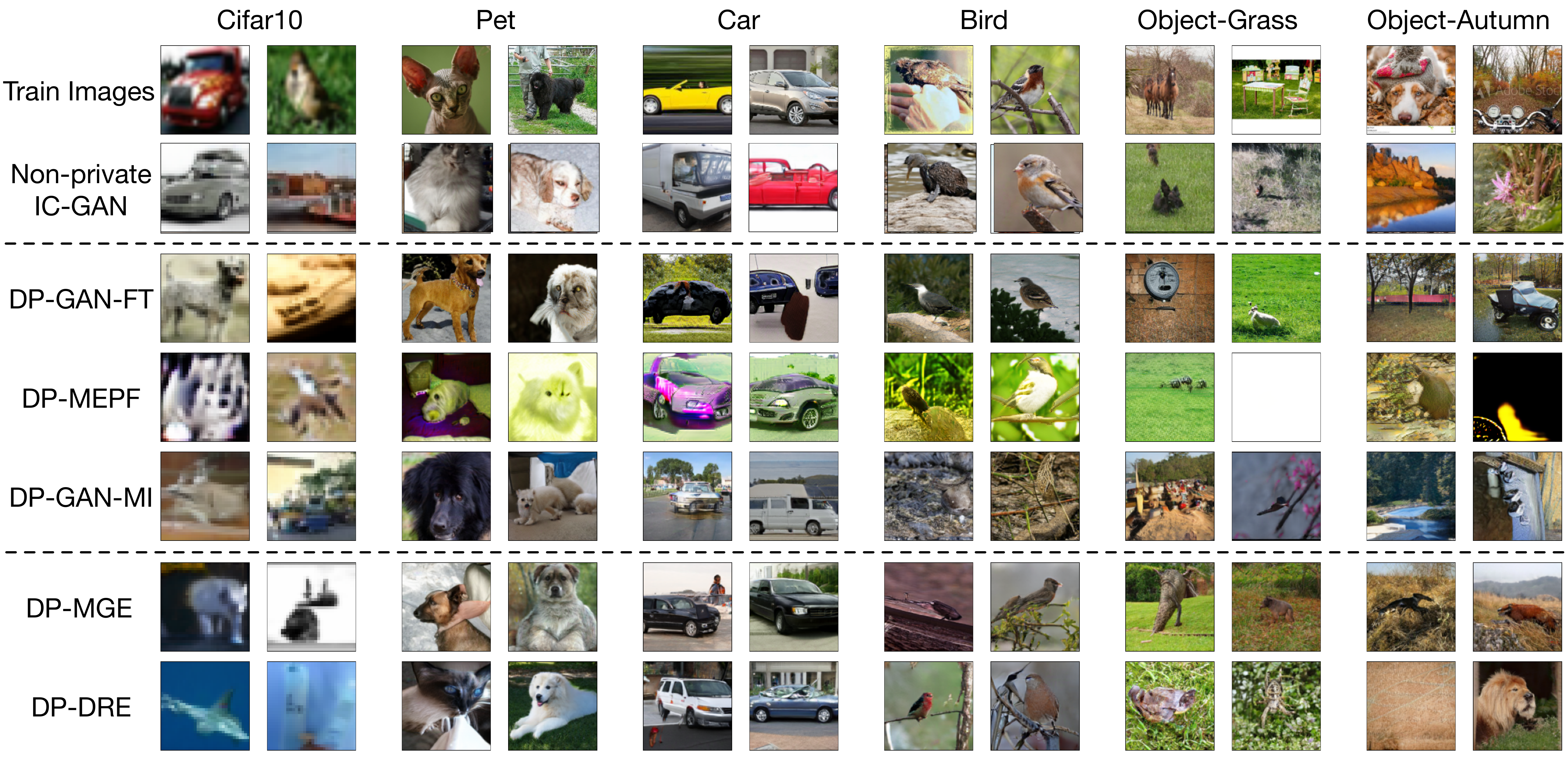}
	\caption{Examples of all algorithms across six datasets when $\varepsilon=\infty$.} 
	\label{fig:examples_1e6}
\end{figure}

\begin{figure}[t]
\centering
	\includegraphics[width=\linewidth]{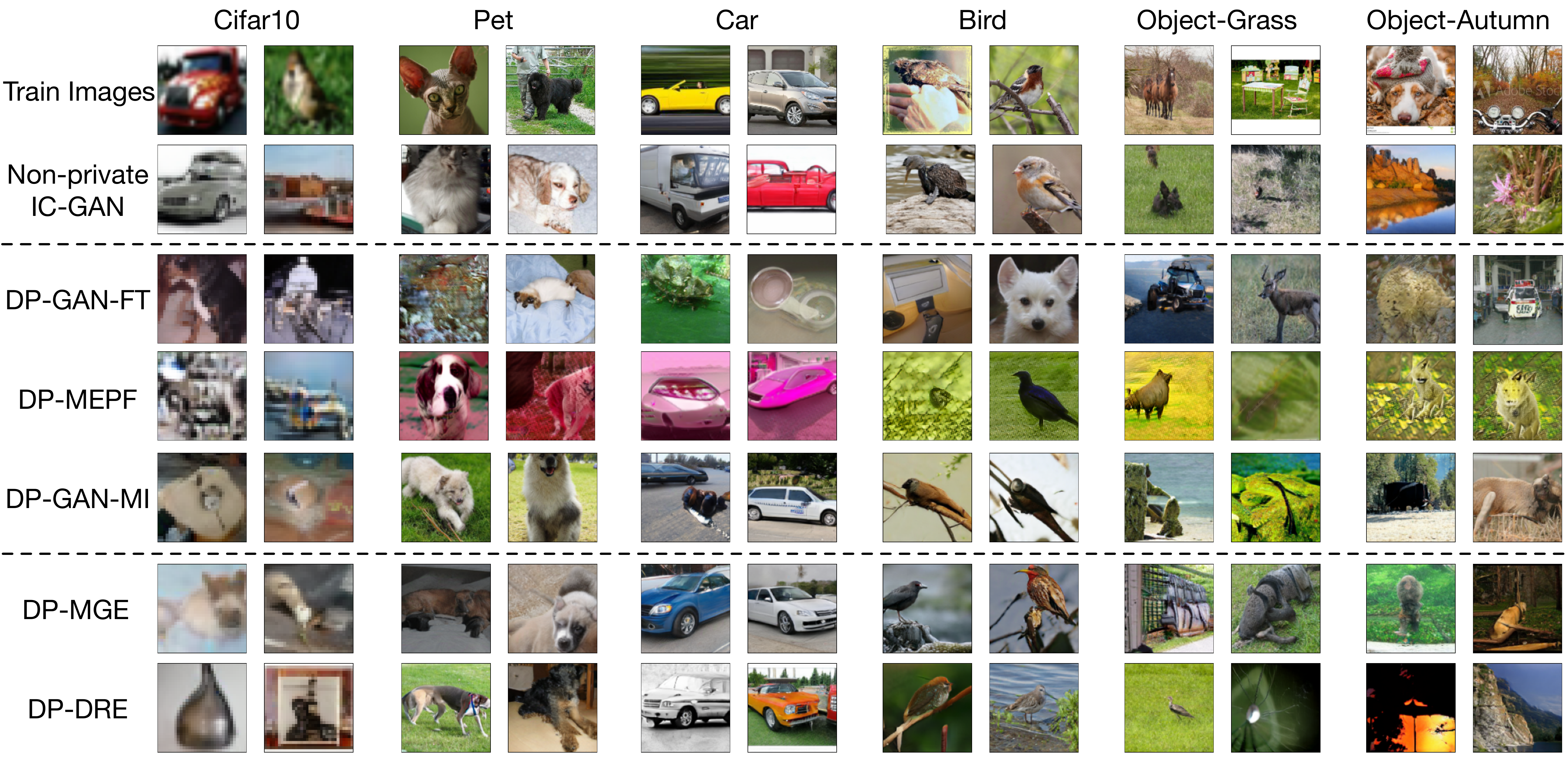}
	\caption{Examples of all algorithms across six datasets when $\varepsilon=10$.} 
	\label{fig:examples_10.0}
\end{figure}

\begin{figure}[t]
\centering
	\includegraphics[width=\linewidth]{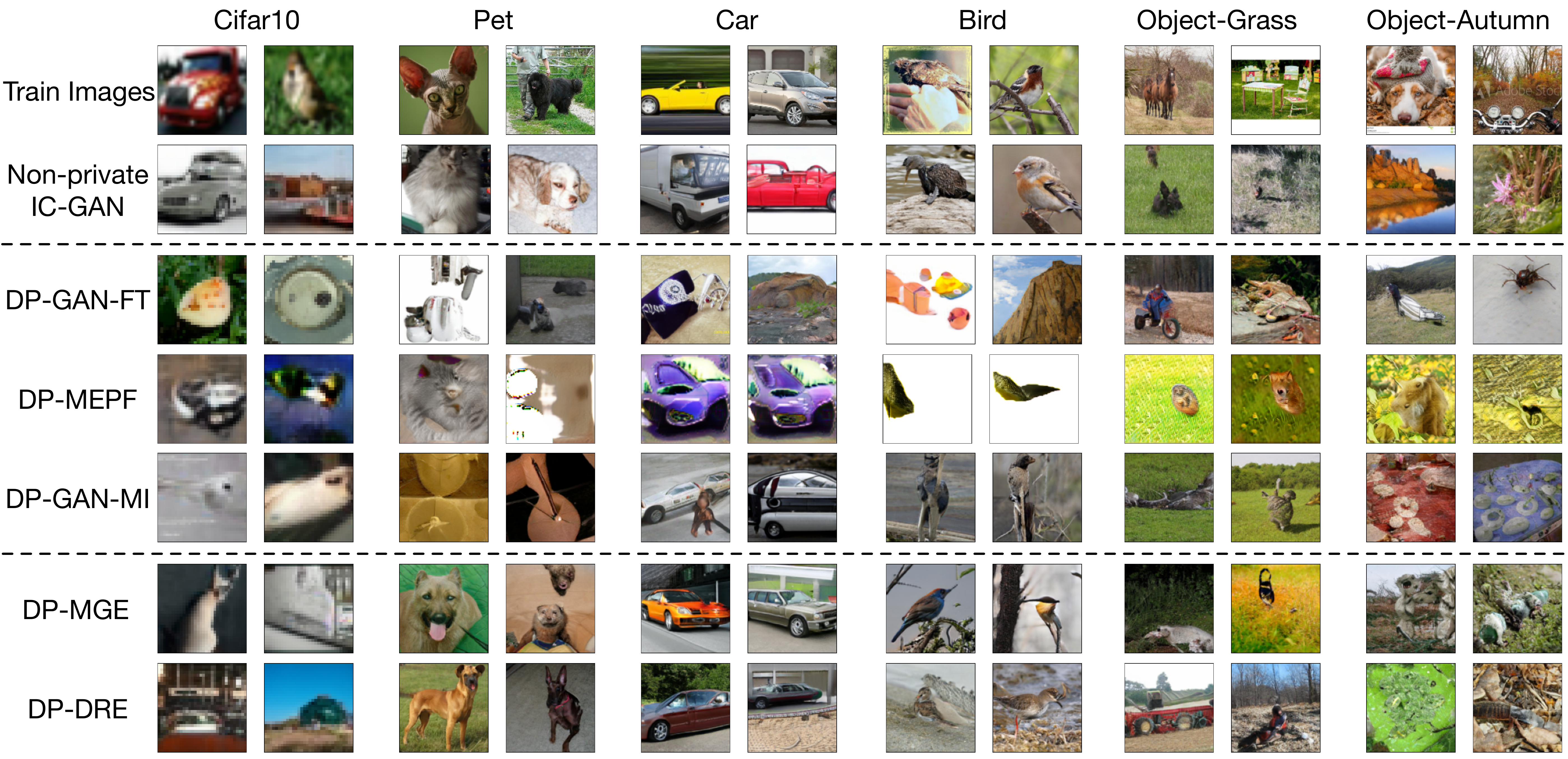}
	\caption{Examples of all algorithms across six datasets when $\varepsilon=3$.}
	\label{fig:examples_3.0}
\end{figure}

\begin{figure}[t]
\centering
	\includegraphics[width=\linewidth]{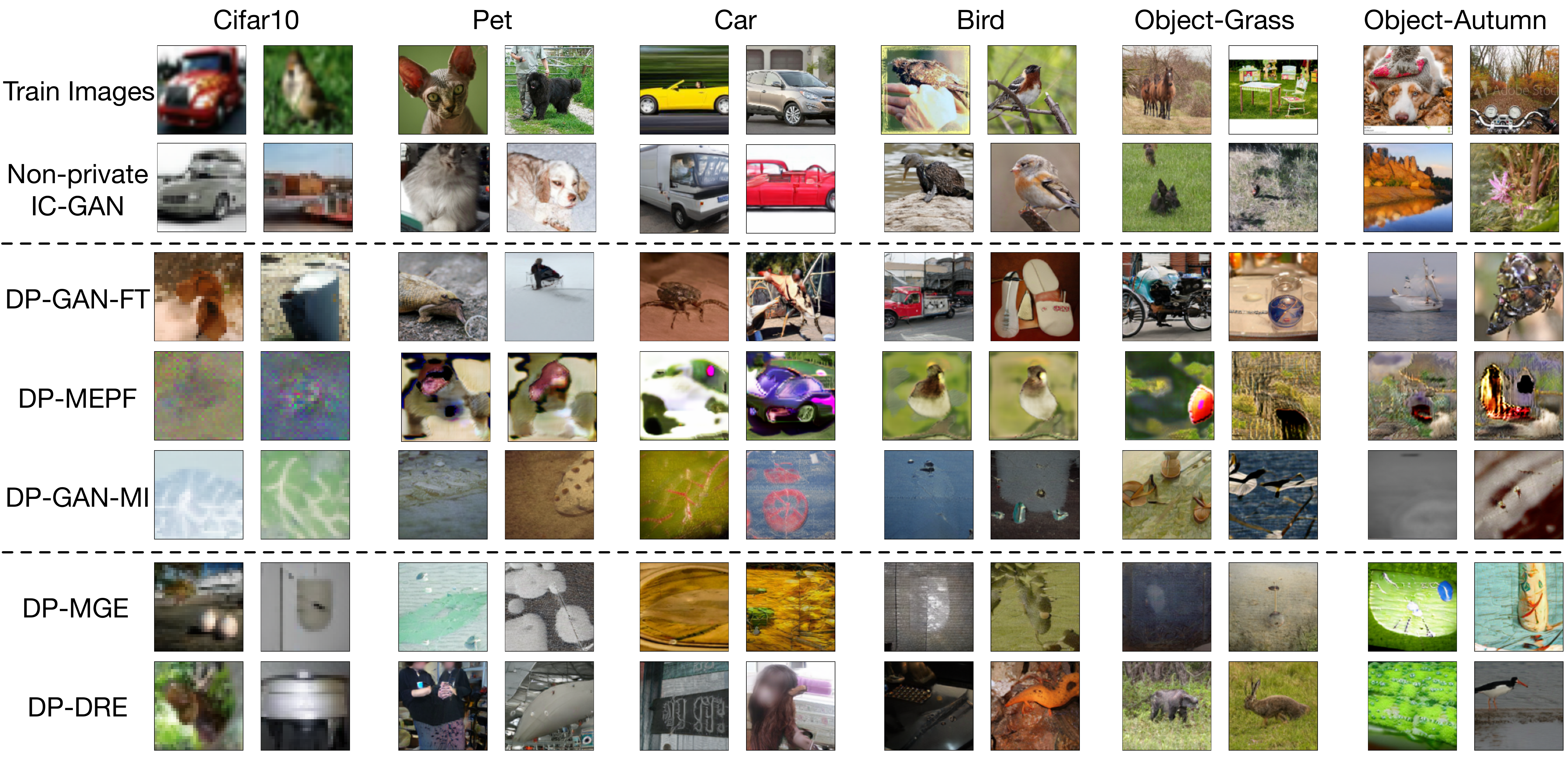}
	\caption{Examples of all algorithms across six datasets when $\varepsilon=0.1$.}
	\label{fig:examples_0.1}
\end{figure}

\end{document}